\PassOptionsToPackage{sort}{natbib}
\documentclass{article}
\pdfoutput=1 
\usepackage{preprint,times}

\usepackage{amsmath,amsfonts,bm}

\def\eqref#1{equation~\ref{#1}}

\def\1{\bm{1}}

\DeclareMathAlphabet{\mathsfit}{\encodingdefault}{\sfdefault}{m}{sl}
\SetMathAlphabet{\mathsfit}{bold}{\encodingdefault}{\sfdefault}{bx}{n}

\newcommand{\E}{\mathbb{E}}

\newcommand{\R}{\mathbb{R}}

\usepackage{graphicx} 
\usepackage[english]{babel}
\usepackage[nopatch=footnote]{microtype}      

\usepackage{amssymb,dsfont,amsmath,amsthm,mathtools,mathrsfs}
\usepackage{mathtools}
\usepackage{graphicx}
\usepackage{tikz-cd}
\usepackage{pb-diagram}
\usepackage{comment}
\usepackage{tcolorbox}
\usepackage{enumitem}
\setlist[enumerate]{leftmargin=.5in}
\setlist[itemize]{leftmargin=.5in}
\makeatletter
\newcommand{\myitem}[1]{%
	\item[#1]\protected@edef\@currentlabel{#1}%
}
\makeatother

\theoremstyle{plain}
\newtheorem{theorem}{Theorem}[section]
\newtheorem{assumption}[theorem]{Assumption}
\newtheorem{lemma}[theorem]{Lemma}
\newtheorem{proposition}[theorem]{Proposition}
\newtheorem{corollary}[theorem]{Corollary}

\newtheoremstyle{my_def}
  {\topsep}   
  {\topsep}   
  {\normalfont}  
  {0pt}       
  {\bfseries\itshape} 
  {.}         
  {5pt plus 1pt minus 1pt} 
  {}          
\theoremstyle{my_def}
\newtheorem{definition}[theorem]{Definition}
\newtheorem{example}[theorem]{Example}

\newtheoremstyle{my_rem}
  {1.0\topsep}   
  {1.0\topsep}   
  {\normalfont}  
  {0pt}       
  {\bfseries\itshape} 
  {.}         
  {5pt plus 1pt minus 1pt} 
  {}          
\theoremstyle{my_rem}
\newtheorem{remark}[theorem]{Remark}

\numberwithin{equation}{section}
\numberwithin{theorem}{section}

\newcommand{\RR}{\mathbb{R}}
\newcommand{\Pp}{\mathcal{P}}

\newcommand{\cP}{\mathcal{P}}

\renewcommand{\d}[1]{\mathrm{d}}

\renewcommand*\d{\mathop{}\!\mathrm{d}}

\newcommand{\slot}{{\,\cdot\,}}
\newcommand{\push}{_{\#}}

\newcommand{\sfW}{\mathsf{W}}
\newcommand{\sfT}{\mathsf{T}}
\newcommand{\N}{\mathbb{N}}
\newcommand{\ep}{\varepsilon}

\newcommand{\defeq}{\coloneqq}

\DeclarePairedDelimiterX{\iptemp}[2]{\langle}{\rangle}{#1, #2}

\DeclarePairedDelimiterX{\normtemp}[1]{\lVert}{\rVert}{#1}
\newcommand{\norm}{\normtemp}
\DeclarePairedDelimiterX{\abstemp}[1]{\lvert}{\rvert}{#1}
\newcommand{\abs}{\abstemp}
\DeclarePairedDelimiterX{\trtemp}[1]{(}{)}{#1}

\DeclarePairedDelimiterX{\SEtemp}[2]{(}{)}{#1, #2}

\DeclarePairedDelimiterX{\SGtemp}[1]{(}{)}{#1}

\def\qfa{\quad\text{for all}\quad}

\def\qas{\quad\text{as}\quad}
\def\qa{\quad\text{and}\quad}
\def\qf{\quad\text{for}\quad}
\def\qw{\quad\text{where}\quad}
\def\qin{\quad\text{in}\quad}
\def\qon{\quad\text{on}\quad}

\def\qif{\quad\text{if}\quad}

\renewcommand{\eqref}[1]{\textup{(\ref{#1})}}
\usepackage{hyperref}
\usepackage{url}
\newcommand{\compactemail}[1]{{\fontfamily{cmtt}\fontseries{m}\selectfont #1}}

\newcommand{\mytitle}{
        Universal Approximation of Measure-to-Measure Operators by Pushforwards
}

\title{\mytitle}

\makeatletter
\let\inserttitle\@title
\makeatother

\author{Takashi Furuya\\
Doshisha University, RIKEN AIP\\
\compactemail{tfuruya@mail.doshisha.ac.jp} \\
\And
Nicholas H. Nelsen \\
UT Austin \\
\compactemail{nnelsen@oden.utexas.edu} \\
\And
Frank Cole \\
UCLA \\
\compactemail{fcole99@g.ucla.edu}
}

\iclrfinalcopy

\ifpdf
\hypersetup{
	pdftitle={Universal Approximation of Measure-to-Measure Operators by Pushforwards},
	pdfauthor={T. Furuya, N. H. Nelsen, and F. Cole}
}
\fi

\begin{document}

\maketitle

\begin{abstract}
Many learning tasks map an input distribution to an output distribution. A natural way to model such an operator is to transform each input sample using a continuous function that may depend on the entire input distribution, and then take the distribution of the transformed samples. This defines a measure-dependent pushforward model and includes measure-theoretic formulations of transformers. We ask when such models can approximate arbitrary continuous operators between spaces of probability measures. We first show that universal approximation fails when atomic inputs are allowed: some continuous measure-to-measure operators that split or redistribute atomic mass cannot be approximated arbitrarily well by deterministic pushforward models. We then introduce the uniform level set condition, which requires a continuous measure-dependent scalarization whose shrinking level set neighborhoods carry uniformly vanishing mass over the input family. This condition is satisfied, in particular, by compact families of absolutely continuous measures. On every compact family satisfying this condition, we prove that any continuous measure-to-measure operator with outputs of finite $p$-th moment can be uniformly approximated, in the $p$-Wasserstein distance, by continuous measure-dependent pushforwards. Combining our theorem with existing approximation results for measure-dependent in-context maps yields universal approximation by measure-theoretic transformers. We also extend the framework to continuously-varying source measures, yielding a corresponding universality result for a class of pushforward models that are closely aligned with cross-attention architectures.
\end{abstract}

\section{Introduction}
Many scientific and learning problems involving distributions can naturally be formulated as maps between spaces of probability measures. 
Given an input distribution, one would like to produce another distribution that depends continuously on the input. 
Examples include the prior-to-posterior map in Bayesian inference \citep{stuart2010inverse,sprungk2020local}, proximal maps on Wasserstein spaces \citep{ambrosio2008gradient}, and solution maps of McKean--Vlasov dynamics \citep{kolokoltsov2010nonlinear}.

A particularly important class of measure-to-measure maps is given by \emph{pushforward models}. 
In such models, each sample from the input distribution is transformed by a map that may itself depend on the entire input distribution, and the collection of transformed samples determines the output distribution.
This structure appears naturally in measure-theoretic formulations of transformers and attention mechanisms
\citep{castin2025unified,furuya2024transformers,geshkovski2025mathematical,rigollet2026mean,sander2022sinkformers,vuckovic2020mathematical}.
Closely related pushforward constructions also arise in generative modeling, including flow matching \citep{lipman2022flow} and conditional normalizing flows \citep{winkler2019learning}.
This leads to the following fundamental approximation question:
\begin{tcolorbox}
\centering
    \textbf{To what extent can an arbitrary continuous operator between spaces of probability measures be approximated by continuous pushforward models whose transformation is allowed to depend on the input measure?}
\end{tcolorbox}

In this paper, we first show that universal approximation by continuous pushforward models fails in general when atomic input measures are allowed.
We provide counterexamples induced by continuous measure-to-measure operators that split or redistribute atomic mass, which cannot in general be approximated by deterministic pushforward models.

We then identify a sufficient geometric-measure-theoretic condition on a compact family of input measures, which we call the \emph{uniform level set condition}. 
Roughly speaking, this condition requires the existence of a continuous measure-dependent scalarization whose shrinking level set neighborhoods carry uniformly vanishing mass across the entire family of input measures. 
Under this condition, we prove that every continuous measure-to-measure operator with finite $p$-th moment outputs can be uniformly approximated in the $p$-Wasserstein distance by continuous pushforward models.

The uniform level set condition is substantially more flexible than absolute continuity with respect to  the Lebesgue measure. In particular, it is automatically satisfied by compact families of non-atomic measures in one dimension, as well as by compact families dominated by a common non-atomic reference measure in higher dimensions.
It also applies to certain families of singular measures supported on continuously-varying lower-dimensional structures, such as normalized arclength measures on a compact family of injective curves.
At the same time, it necessarily excludes atomic measures, which is consistent with the obstruction described previously.

Our results have direct implications for the expressive power of transformers.
Measure-theoretic transformers act on an input distribution through a measure-dependent transformation of its tokens and therefore induce a pushforward map at the level of probability measures.
Combining our main approximation theorem with existing universality results for in-context maps, we establish universal approximation of continuous measure-to-measure operators by measure-theoretic transformers on compact families satisfying the uniform level set condition.
Last, we extend the pushforward approximation framework to continuously-varying source measures, which relates to cross-attention.

\paragraph{Related work.}
The universal approximation properties of transformers have been extensively studied in the classical sequence-to-sequence setting \citep{vaswani2017attention}. Early results established universality for transformers and their sparse variants
\citep{yun2019transformers,yun2020n}, followed by extensions concerning constrained architectures, prompting, approximation rates, and more refined descriptions of the expressive power of attention
\citep{kratsios2021universal,petrov2024prompting,liu2026attention,chen2025fundamental,cheng2026unified,jiao2025approximation,jiang2024approximation,takakura2023approximation,wang2024understanding,shen2026understanding,havrilla2024understanding,calvello2025continuum}.

A complementary line of work studies transformers in a measure-theoretic formulation, where a collection of tokens is represented by its empirical measure and attention is extended to probability measures. 
This viewpoint has led to mathematical descriptions of attention and transformer dynamics
\citep{vuckovic2020mathematical,sander2022sinkformers,geshkovski2025mathematical,castin2025unified,rigollet2026mean,bach2025learning,bach2026learning,burger2025analysis},
as well as approximation results for functions whose inputs involve probability measures
\citep{furuya2024transformers,furuya2026approximation,biswal2024universal,geshkovski2024measure,fraiman2026expressive,cole2026context} and characterizations of the smoothness of attention layers acting on probability measures \citep{castin2024smooth, vuckovic2021regularity,kim2021lipschitz}.

More closely related to the present work are approximation and exact representation results in which both the input and output are probability measures. In work of \citet{furuya2025transformers}, transformer-induced measure-to-measure maps are studied through the lens of support-preserving transformations. 
Related exact transport map representations of continuous measure-to-measure transformations are investigated by \citet{lavenant2026continuous}. 
These works naturally lead to structural conditions under which the output does not split the mass carried by individual input particles. 
Such conditions are particularly well suited to empirical measures, since an empirical input is then transformed into another empirical measure with compatible particle structure.

The question studied in the present paper is different.
We ask under what conditions on a family of input measures can an arbitrary continuous measure-to-measure operator be approximated by deterministic pushforward models \emph{without imposing a support-preserving or non-splitting condition on the target map}.
This distinction is \emph{essential}: natural maps such as Bayesian updating or joint measure conditioning may continuously change the weights of atoms and hence fall outside exact deterministic pushforward representations.
Our results characterize a regime in which this obstruction disappears at the level of uniform approximation and thereby connect general continuous measure-to-measure operators with transformer-induced pushforward models.

\paragraph{Contributions.}
The main contributions of this work are summarized as follows.

\begin{enumerate}[label={(C\arabic*)},labelindent=0.05em,leftmargin=*]
\item \textbf{An obstruction caused by atomic measures.}
We show that continuous measure-dependent pushforward models are not universal on general compact subsets of probability measures (Proposition~\ref{prop:obstructions}).
In particular, deterministic pushforwards cannot split atomic mass or freely modify atomic weights. This yields explicit continuous measure-to-measure operators that remain separated from every deterministic pushforward model by a strictly positive $1$-Wasserstein distance (Examples~\ref{ex:diff-card} and ~\ref{ex:diff-weight}).

\item \textbf{Universal approximation under a uniform level set condition.}
We introduce the uniform level set condition for compact families of input measures (Assumption~\ref{ass:uniform-level-set-condition}).
We prove that, under this condition, every continuous measure-to-measure operator with outputs of finite $p$-th moment can be uniformly approximated in the $p$-Wasserstein distance by continuous measure-dependent pushforward maps (Theorem~\ref{thm:main-multi}). 
The condition necessarily excludes atomic measures, but is satisfied by compact families of non-atomic measures in one dimension, compact families dominated by a common non-atomic reference measure, and certain families of singular measures supported on continuously-varying lower-dimensional structures (Examples~\ref{ex:one-dimensional-uniform-levelset}, \ref{ex:dominated-family-uniform-levelset}, and \ref{ex:curve-measure-uniform}).

\item \textbf{Consequences for transformer architectures.}
By combining our main approximation theorem with existing universality results for measure-theoretic in-context maps, we establish universal approximation of continuous measure-to-measure operators by measure-theoretic transformers on compact families satisfying the uniform level set condition (Theorem~\ref{thm:transformer-universality}).
We further extend the pushforward approximation framework to continuously-varying source measures, which has applications to cross-attention (Corollary~\ref{cor:main-cross}).
\end{enumerate}

\paragraph{Notation.}
Let $|\slot|$ denote the Euclidean norm. 
We denote by 
$\cP(\mathbb{R}^d)$ the set of all Borel probability measures on $\mathbb{R}^d$ and, 
for $1\le p<\infty$,
by $
\mathcal P_p(\mathbb{R}^d)
\defeq 
\{
\mu\in\mathcal P(\mathbb{R}^d)\colon
\int_{\R^d} |x|^p\,\mu(dx)<\infty
\}$ the set of probability measures with finite $p$-th moment.
The subset of \emph{non-atomic} or \emph{atomless} probability measures is
\begin{align}\label{eqn:atomless}
\mathcal P_{\rm na}(\mathbb{R}^d)
\defeq 
\left\{
\mu\in\mathcal P(\mathbb{R}^d)\colon 
\mu(\{x\})=0
\ \text{ for every }x\in\mathbb{R}^d
\right\}.
\end{align}
We equip $\mathcal P(\mathbb{R}^d)$ with the topology of weak convergence of probability measures.
Whenever $\mathcal P_p(\mathbb{R}^d)$ is considered, it is equipped with the $p$-Wasserstein metric $\sfW_p$ given by
\[
\sfW_p(\mu,\nu)
\defeq
\left(
\inf_{\pi\in\Pi(\mu,\nu)}
\int_{\mathbb{R}^d\times\mathbb{R}^d}|x-y|^p\,\pi(dx,dy)
\right)^{1/p},
\]
where $\Pi(\mu,\nu)$ is the set of couplings of $\mu$ and $\nu$.
For a measurable map $T\colon\mathbb{R}^d\to\mathbb R^{d'}$ and $\mu\in\cP(\mathbb{R}^d)$, the pushforward measure $T\push\mu\in\cP(\R^{d'})$ is defined by
$
T_\#\mu(A)
\defeq 
\mu(T^{-1}(A))$
for all
$A\subset\R^{d'}$ Borel.
For topological spaces $X$ and $Y$, we denote by
$\mathcal{C}(X,Y)$ the space of continuous maps from $X$ to $Y$; the topology on spaces of probability measures is always understood as above.

\section{The pushforward approximation problem}
\label{sec:pushforward-approx}

This section formulates the approximation problem studied in this work, illustrates it with two examples of continuous measure-to-measure operators, and identifies atomic obstructions.

\subsection{The measure-to-measure approximation problem}\label{sec:pushforward-approx-problem}

In what follows, let $p\in[1,\infty)$,  $d\in\N$, and $d'\in\N$.
We investigate whether for every compact set $K \subset \Pp_p(\mathbb{R}^d)$ with respect to $\sfW_p$, every $F \in \mathcal{C}(K,\Pp_p(\RR^{d'}))$, and every $\varepsilon>0$, there exists a continuous map $G \in \mathcal{C}(K\times \mathbb{R}^d,\RR^{d'})$ such that
\[
\sup_{\mu\in K}
\sfW_p\bigl(G(\mu,\slot)_\#\mu, F(\mu)\bigr)
<\varepsilon.
\]
In other words, we ask whether continuous measure-to-measure operators can be uniformly approximated by continuous measure-dependent pushforward models. Pushforward models are especially attractive from a computational perspective \citep{marzouk2017sampling}. This approximation problem arises naturally in several settings.
We illustrate it with an example.

\begin{example}[Prior-to-posterior map in Bayesian inference]
In Bayesian inverse problems, updating a prior probability measure using observed data modeled by a likelihood naturally defines a map from prior to posterior measures \citep{stuart2010inverse,nelsen2026operator}.
Let $\Phi\colon\mathbb{R}^d\to\mathbb{R}$ denote the negative log-likelihood associated with
the observed data. 
For a prior measure $\mu\in\mathcal{P}_p(\mathbb{R}^d)$, the corresponding posterior measure is defined by
\[
F_\Phi(\mu)(du)
\defeq 
\frac{\exp\bigl(-\Phi(u)\bigr)}{Z_\mu}\,\mu(du),
\qw
Z_\mu
\defeq 
\int_{\mathbb{R}^d} \exp\bigl(-\Phi(v)\bigr)\,\mu(dv).
\]
Thus, Bayesian inference defines the prior-to-posterior map
$
F_\Phi\colon
\mathcal{P}_p(\mathbb{R}^d)
\to
\mathcal{P}_p(\mathbb{R}^d)
$.
If $\Phi$ is continuous and bounded from below, then $F_\Phi$ is well-defined and $\sfW_p$-continuous \citep[Lemma~16]{sprungk2020local}.
Notice that $F_\Phi$ generally changes the weights of the input measure while preserving its null sets. 
In particular, for an atomic prior $\mu=\sum_{i=1}^N a_i\delta_{x_i}$, the posterior is
\[
F_\Phi(\mu)
=
\sum_{i=1}^N
\frac{a_i e^{-\Phi(x_i)}}
{\sum_{j=1}^N a_j e^{-\Phi(x_j)}}
\delta_{x_i}.
\]
Hence, although the support is unchanged, the weights are in general different.
Consequently, $F_\Phi(\mu)$ cannot in general be represented as a deterministic pushforward  of the form $G(\mu, \slot)_\#\mu$. 
This provides a natural example of a continuous measure-to-measure operator that lies beyond exact deterministic pushforward representations on atomic and nonatomic measures, while still falling within the approximation problem considered in this work.
\end{example}

See Appendix~\ref{app:additional-pushforward-approx} for a further example based on proximal maps between Wasserstein spaces.

\subsection{Obstructions caused by atomic measures}

In general, the answer to the approximation question posed in Section~\ref{sec:pushforward-approx-problem} is negative when atomic input measures are allowed. More precisely, the following holds.

\begin{proposition}
\label{prop:obstructions}
There exist a compact set
$
K\subset \mathcal P_p(\mathbb R^d)
$ in $\sfW_p$, a continuous map
$
F\in \mathcal C(K,\mathcal P_p(\mathbb R^{d'})),
$
and $\varepsilon>0$ such that there is no continuous map $G\colon K\times \mathbb R^d\to\mathbb R^{d'}$ satisfying
\[
\sup_{\mu\in K}
\sfW_p\bigl(G(\mu,\slot)\push\mu, F(\mu)\bigr)
<\varepsilon.
\]
\end{proposition}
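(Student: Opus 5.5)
It suffices to exhibit a single explicit counterexample, and the simplest is the one the paper previews as Example~\ref{ex:diff-card}: a single Dirac input whose image splits its mass. Take $K=\{\delta_0\}$, a single point in $\mathcal P_p(\mathbb R^d)$, which is trivially compact in $\sfW_p$. Fix a unit vector $e\in\mathbb R^{d'}$ and define $F(\delta_0)\defeq \tfrac12\delta_{e}+\tfrac12\delta_{-e}\in\mathcal P_p(\mathbb R^{d'})$. Since $K$ is a singleton, $F$ is continuous. For any $G\colon K\times\mathbb R^d\to\mathbb R^{d'}$, continuous or not, the pushforward is $G(\delta_0,\slot)\push\delta_0=\delta_{y}$ with $y\defeq G(\delta_0,0)$. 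Continuity plays no role at all; the obstruction is purely that a deterministic map sends a Dirac mass to a Dirac mass.

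The key step is a lower bound on $\sfW_p(\delta_y,F(\delta_0))$ that is uniform in $y$. The only coupling between a Dirac and any measure is the product coupling, so
\[
\sfW_p\bigl(\delta_y,\tfrac12\delta_e+\tfrac12\delta_{-e}\bigr)^p=\tfrac12|y-e|^p+\tfrac12|y+e|^p .
\]
By the triangle inequality, $|y-e|+|y+e|\ge 2$. For $p\ge1$, convexity of $t\mapsto t^p$ then gives $\tfrac12|y-e|^p+\tfrac12|y+e|^p\ge\bigl(\tfrac12(|y-e|+|y+e|)\bigr)^p\ge1$. Hence $\sfW_p\ge 1$ for every $y$, and any $\varepsilon\in(0,1]$ works: no $G$ achieves a supremum below $\varepsilon$.

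There is no real obstacle here. The only point needing care is this uniform lower bound, which reduces to the triangle inequality and Jensen's inequality as above. If a less degenerate $K$ were desired, one could instead take $K=\{\delta_x\colon |x|\le1\}$ with $F(\delta_x)=\tfrac12\delta_{x+e}+\tfrac12\delta_{x-e}$ (here $x$ is viewed in $\mathbb R^{d'}$ by padding or truncation; for $d'<d$ one can just use the singleton example). This $F$ is continuous because $\sfW_p(\delta_x,\delta_{x'})=|x-x'|$, and the same computation gives the bound $\ge1$ pointwise in $\mu$, hence for the supremum.

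A weight-redistribution variant in the spirit of Example~\ref{ex:diff-weight} can also be obtained. Take $\mu=\tfrac12\delta_0+\tfrac12\delta_{e_1}$ and $F(\mu)=\tfrac14\delta_0+\tfrac34\delta_{e_1}$. Every pushforward of $\mu$ has atoms whose weights lie in $\{\tfrac12,1\}$. Its $\sfW_p$ distance to $F(\mu)$ is bounded below, because moving mass $\tfrac14$ between the far-apart points of $F(\mu)$ costs a fixed amount. This variant is not needed for the proposition, however, so the singleton-Dirac argument is the proof I would present.
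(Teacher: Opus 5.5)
Your proposal is correct and is essentially the paper's own argument (Example~\ref{ex:diff-card}): a singleton $K=\{\delta_0\}$, an $F$ that splits the Dirac into two atoms, and the fact that any pushforward of a Dirac is again a Dirac. The one difference is that you work directly in $\mathbb R^d$, $\mathbb R^{d'}$ and for general $p$, using Jensen's inequality, whereas the paper only computes $\sfW_1$ on $\mathbb R$ and leaves implicit the passage to general dimensions and to $\sfW_p\ge\sfW_1$.
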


The obstruction already appears in the simplest possible examples.

\begin{example}[Different support cardinalities]
\label{ex:diff-card}
Let $K\defeq \{\delta_0\}\subset\mathcal P(\mathbb R)$ and define
\[
F(\delta_0)\defeq 
\frac12\delta_{-1}+\frac12\delta_1.
\]
Since $K$ is a singleton, $F\colon K\to\mathcal P_1(\mathbb R)$ is continuous.
For any continuous $G\colon K\times \mathbb R\to\mathbb R$, we have
$
G(\delta_0,\slot)\push\delta_0
=
\delta_{G(\delta_0,0)}
$.
Therefore,
\begin{align*}
\sfW_1\bigl(
G(\delta_0,\slot)\push\delta_0,
F(\delta_0)
\bigr)
=
\frac12\left|G(\delta_0,0)+1\right|
+
\frac12\left|G(\delta_0,0)-1\right|
\ge 1
\end{align*}
by the triangle inequality. Hence
\[
\inf_{G\in\mathcal C(K\times \mathbb R, \mathbb R)}
\sup_{\mu\in K}
\sfW_1\bigl(
G(\mu, \slot)\push\mu, F(\mu)
\bigr)
\ge 1.
\]
Thus, a deterministic pushforward cannot approximate a map that splits a single input atom into two.
\end{example}

The preceding example changes the number of atoms. 
However, this is not the essential issue: an obstruction remains even when the input and output measures have the same support cardinality.

\begin{example}[Same support cardinality, different weights]
\label{ex:diff-weight}

Let $K\defeq \left\{\frac12\delta_{-1}+\frac12\delta_1\right\}$ and define
\[
\nu = \frac12\delta_{-1}+\frac12\delta_1 \qa F(\nu)\defeq \frac13\delta_{-1}+\frac23\delta_1.
\]
Again, $F\colon K\to\mathcal P_1(\mathbb R)$ is continuous.
For any continuous map $G\colon K\times \mathbb R\to\mathbb R$, it holds that
\[
G(\nu,\slot)\push\nu
=
\frac12\delta_{G(\nu,-1)}
+
\frac12\delta_{G(\nu, 1)}.
\]
Proposition~\ref{prop:two_atom_counterexample} delivers the lower bound
\[
\sfW_1\bigl(G(\nu,\slot)\push\nu, F(\nu) \bigr)
\ge \frac13.
\]
Consequently,
\[
\inf_{G\in\mathcal C(K\times \mathbb R,\mathbb R)}
\sup_{\mu\in K}
\sfW_1\bigl(
G(\mu, \slot)\push\mu, F(\mu)
\bigr)
\ge \frac13.
\]
Thus, even when the input and output distributions have the same number of atoms, a deterministic pushforward cannot in general modify their weights.
\end{example}

\section{Universal approximation by continuous pushforward models}
\label{sec:universal-approx}
This section establishes our main universal approximation result for continuous measure-dependent pushforward models.

\subsection{Uniform level set condition}
We introduce a sufficient condition on the family $K$ of input measures under which universal approximation holds.

\begin{assumption}
\label{ass:uniform-level-set-condition}
Let $K \subset \mathcal{P}_p(\mathbb{R}^d)$ be $\sfW_p$-compact. 
There exists $\varphi\in \mathcal{C}(K\times \mathbb{R}^d ,\mathbb R)$ such that
\begin{equation}
\label{eq:uniform-level-set-condition}
\lim_{ r\downarrow0}
\sup_{\mu\in K}\sup_{t\in\mathbb R}
\mu\bigl(\{x\in\mathbb{R}^d\colon |\varphi(\mu,x)-t|\le  r\}\bigr)
=0.
\end{equation}
\end{assumption}

We call \eqref{eq:uniform-level-set-condition} the \emph{uniform level set condition}. Intuitively, Assumption~\ref{ass:uniform-level-set-condition} requires the existence of a continuous scalarization $\varphi$ whose level sets carry uniformly vanishing mass over the family $K$. 
In particular, the condition is satisfied when all measures in $K$ are absolutely continuous with respect to Lebesgue measure: one may take, for example, any linear projection $\varphi(\mu,x)\defeq x\cdot v$ in nonzero direction $v$ independently of $\mu$.
More generally, as shown below, absolute continuity with respect to Lebesgue
measure is far from necessary. The importance of Assumption \ref{ass:uniform-level-set-condition} is that it enables us to represent the one-dimensional uniform distribution $\mathrm{Unif}[0,1]$ exactly by a continuous pushforward model. This is a key ingredient in the proof of Theorem \ref{thm:main-multi}, our main approximation result.

The uniform level set condition is closely related to the atomless property \eqref{eqn:atomless}.
\begin{remark}[Uniform level set condition implies non-atomic family]\label{rem-unif-level-set}
If $K$ is such that Assumption~\ref{ass:uniform-level-set-condition} holds, then $K\subset \mathcal P_{\rm na}(\mathbb{R}^d)$. Indeed, fix $\mu\in K$ and $x_0\in\mathbb{R}^d$, and set
$t=\varphi(\mu,x_0)$. Then
\[
\{x_0\}
\subset
\{x\in\mathbb{R}^d\colon 
|\varphi(\mu, x)-\varphi(\mu, x_0)|\le  r\}
\]
for any $r\geq 0$.
Hence
\[
\mu\bigl(\{x_0\}\bigr)
\le
\sup_{\nu\in K}\sup_{t\in\mathbb R}
\nu\bigl(
\{x\in\mathbb{R}^d\colon |\varphi(\nu,x)-t|\le r\}
\bigr)
\to 0
\]
as $ r\downarrow0$ by \eqref{eq:uniform-level-set-condition}. Since $x_0$ was arbitrary, $\mu$ is non-atomic as asserted.
\end{remark}
Moreover, the uniform level set condition \eqref{eq:uniform-level-set-condition} corresponding to $K$ and $\varphi$ is equivalent to the family $\{\varphi(\mu,\slot)\push\mu\}_{\mu\in K}\subset\mathcal{P}_{\rm na}(\R)$ of scalar pushforwards being non-atomic; see Remark~\ref{rem:uniform-level-set-condition-equiv}.

To better understand Assumption~\ref{ass:uniform-level-set-condition}, we present several examples of compact families that satisfy it.
\begin{example}[One-dimensional case]
\label{ex:one-dimensional-uniform-levelset}
Let $K\subset \mathcal P_p(\mathbb R)$ be compact with respect to $\sfW_p$. 
Then the identity function $\varphi\colon \mathbb R\to\mathbb R$ given by $\varphi(x)\defeq x$ satisfies
\[
\lim_{ r\downarrow0}
\sup_{\mu\in K}\sup_{t\in\mathbb R}
\mu\bigl(
\{x\in\mathbb R\colon |\varphi(x)-t|\le r\}
\bigr)
=0.
\]
In particular, every compact family of non-atomic probability measures on $\mathbb R$ satisfies the uniform level set condition \eqref{eq:uniform-level-set-condition}. See Appendix~\ref{app:one-dimensional-uniform-levelset} for the proof. 
\end{example}

Remark \ref{rem-unif-level-set} shows that if a compact subset $K \subset \mathcal{P}_p(\R^d)$ satisfies Assumption \ref{ass:uniform-level-set-condition}, then every element of $K$ is atomless. Example \ref{ex:one-dimensional-uniform-levelset} establishes the converse when $d=1$; this gives a complete characterization of subsets satisfying Assumption \ref{ass:uniform-level-set-condition} in the one-dimensional setting.

\begin{example}[Families dominated by a common non-atomic measure]
\label{ex:dominated-family-uniform-levelset}
Let $\rho\in\mathcal P_{\rm na}(\mathbb R^d)$ and let
\[
K
\subset
\mathcal{P}_{\mathrm{ac},p}(\rho) \defeq  \{\mu\in\mathcal P_p(\mathbb R^d)\colon \mu\ll\rho\}
\]
be compact with respect to $\sfW_p$.
Then there exists a continuous function $\varphi\colon \mathbb R^d\to\mathbb R$ such that
\[
\lim_{ r\downarrow0}
\sup_{\mu\in K}\sup_{t\in\mathbb R}
\mu\bigl(
\{x\in\mathbb R^d\colon |\varphi(x)-t|\le r\}
\bigr)
=0.
\]
\end{example}

See Appendix~\ref{app:dominated-family-uniform-levelset} for the proof. By taking $\rho$ to be the Lebesgue measure, Example \ref{ex:dominated-family-uniform-levelset} implies that Assumption \ref{ass:uniform-level-set-condition} is satisfied by any $\sfW_p$-compact family of densities. Importantly, the only requirement in Example \ref{ex:dominated-family-uniform-levelset} is that $\rho$ be non-atomic. In particular, Example \ref{ex:dominated-family-uniform-levelset} applies to compact families of probability measures that are supported on a low-dimensional manifold $\mathcal{M}$ and admit a density with respect to some fixed atomless reference measure on $\mathcal{M}$, e.g., a uniform measure on $\mathcal{M}$. Such families of measures arise naturally under the manifold hypothesis in machine learning.

\begin{example}[Measures supported on a compact family of curves]
\label{ex:curve-measure-uniform}

Let $Q\subset\mathbb R^d$ be compact and let $K'\subset \mathcal{C}^1([0,1],Q)$ be compact. 
Assume that there exists $c>0$ such that
\[
|\dot\gamma(s)|\ge c
\qf \gamma\in K'\qa s\in[0,1],
\]
and that every $\gamma\in K'$ is injective.
For each $\gamma\in K'$, define the normalized arclength measure
\[
\sigma_\gamma
\defeq 
\left(\int_0^1|\dot\gamma(s)|\,ds\right)^{-1}
\gamma_\#\bigl(|\dot\gamma(s)|\,ds\bigr).
\]
Then there exists a continuous function $\widetilde\varphi\colon  K'\times Q \to[0,1]$ such that $\widetilde\varphi(\gamma, \gamma(s))=s$ for $\gamma\in K'$ and $s\in[0,1]$,
and
\[
\lim_{r\downarrow0}
\sup_{\gamma\in K'}
\sup_{t\in\mathbb R}
\sigma_\gamma
\bigl(\{
x\in Q\colon 
|\widetilde\varphi(\gamma,x)-t|\le r
\}
\bigr)
=0.
\]
Assume, in addition, that the curves in $K'$ are uniquely determined by their images, that is,
\[
\gamma_1([0,1])=\gamma_2([0,1])
\quad\text{implies}\quad
\gamma_1=\gamma_2.
\]
Then the family
$
K\defeq \{\sigma_\gamma\colon \gamma\in K'\} \subset \mathcal{P}_p(Q)
$
satisfies the uniform level set condition \eqref{eq:uniform-level-set-condition}.
\end{example}

See Appendix~\ref{app:curve-measure-uniform} for the proof. Example~\ref{ex:curve-measure-uniform} illustrates that the uniform level set condition can accommodate singular measures whose supports themselves vary continuously. 
Such families naturally arise when data distributions are concentrated near evolving or parameter-dependent low-dimensional structures, rather than on a single fixed reference manifold.

\subsection{Main result}\label{sec-main-results}

We are now ready to state our main universal approximation theorem. 
The key idea is to use the uniform level set condition to find a continuous in-context map that exactly pushes forward each input measure to a common uniform distribution on $[0,1]$.

\begin{theorem}  
\label{thm:main-multi}
Let $K\subset\mathcal P_p(\mathbb{R}^d)$ satisfy Assumption~\ref{ass:uniform-level-set-condition}.
Then for any $F \in \mathcal{C}(K,\mathcal{P}_p(\mathbb R^{d'}))$ and any $\varepsilon>0$, there exists a continuous map $G\colon  K\times \mathbb{R}^d \to \mathbb R^{d'}$ such that
\begin{equation*}
\sup_{\mu\in K}
\sfW_p\bigl( G(\mu,\slot)_\#\mu,
F(\mu)\bigr) < \varepsilon.
\end{equation*}
\end{theorem}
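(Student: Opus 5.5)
The plan is to factor the pushforward through the uniform distribution on $[0,1]$ in two stages.

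\emph{Stage one: map each $\mu$ exactly to $\mathrm{Unif}[0,1]$.}
Set $\lambda\defeq\mathrm{Unif}[0,1]$, take $\varphi$ from Assumption~\ref{ass:uniform-level-set-condition}, and define
\[
U(\mu,x)\defeq \mu\bigl(\{y\colon \varphi(\mu,y)\le \varphi(\mu,x)\}\bigr),
\]
the distribution function of the scalar measure $\varphi(\mu,\slot)\push\mu$ evaluated at $\varphi(\mu,x)$. By Remark~\ref{rem:uniform-level-set-condition-equiv} this scalar measure is non-atomic. The probability integral transform then gives $U(\mu,\slot)\push\mu=\lambda$ exactly. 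Continuity of $U$ on $K\times\mathbb{R}^d$ is the delicate point. Suppose $\mu_n\to\mu$ in $\sfW_p$ and $x_n\to x$. Then $\varphi(\mu_n,\slot)\push\mu_n\to\varphi(\mu,\slot)\push\mu$ weakly. I would check this with a bounded continuous test function $h$: use tightness of the compact set $K$ and the uniform continuity of $\varphi$ on the compact set $K\times B_R$. Weak convergence of the pushforwards gives pointwise convergence of the CDFs, since the limit CDF is continuous. Moreover, \eqref{eq:uniform-level-set-condition} makes the CDFs $\{F_\nu\}_{\nu\in K}$ uniformly equicontinuous in $t$, with modulus $\omega(r)\to0$. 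Combining this with $\varphi(\mu_n,x_n)\to\varphi(\mu,x)$, I expect to obtain $U(\mu_n,x_n)\to U(\mu,x)$ from the Pólya-type uniform convergence of CDFs. I expect this joint continuity to be the main obstacle.

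\emph{Stage two: map $\lambda$ to (approximately) $F(\mu)$.}
Let $\nu\defeq F(\mu)$. Since $F(K)$ is $\sfW_p$-compact, I need a continuous family of maps $H(\mu,\slot)\colon[0,1]\to\mathbb{R}^{d'}$ with $H(\mu,\slot)\push\lambda$ within $\varepsilon$ of $F(\mu)$. The approach is to discretize $F$ through a partition of unity.
\begin{enumerate}
\item Cover $F(K)$ by finitely many $\sfW_p$-balls $B(\nu_i,\varepsilon/2)$. Choose each $\nu_i$ to be a finitely supported measure with rational-free weights; Wasserstein space admits such dense approximations.
\item Take a continuous partition of unity $\{\psi_i\}$ on $K$ subordinate to the pulled-back balls $F^{-1}(B(\nu_i,\varepsilon/2))$.
\item Form $\eta_\mu\defeq\sum_i\psi_i(\mu)\nu_i$. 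By joint convexity of $\sfW_p^p$, this gives $\sfW_p(\eta_\mu,F(\mu))<\varepsilon/2$.
\item Put all the $\nu_i$ on a common finite support $\{y_1,\dots,y_M\}$, so that $\eta_\mu=\sum_k w_k(\mu)\delta_{y_k}$ with weights $w_k$ continuous in $\mu$.
\item Define a step map on $[0,1]$ that sends the interval $[c_{k-1}(\mu),c_k(\mu))$ to $y_k$, where $c_k(\mu)$ are the cumulative weights. It pushes $\lambda$ exactly to $\eta_\mu$, but it is not continuous.
\end{enumerate}

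\emph{Smoothing the step map.}
To restore continuity, replace the jumps by linear ramps of width $\delta$ between consecutive $y_k$. The result $H_\delta(\mu,s)$ is jointly continuous, since the $c_k(\mu)$ are continuous. It differs from the step map only on a set of $\lambda$-measure at most $M\delta$, where it stays inside the bounded convex hull of $\{y_k\}$. The coupling induced by $s\mapsto(H_\delta(\mu,s),\text{step}(\mu,s))$ then gives $\sfW_p^p\le M\delta\,\mathrm{diam}^p$ uniformly in $\mu$. Choosing $\delta$ small therefore yields an error below $\varepsilon/2$.

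\emph{Conclusion.}
Set $G(\mu,x)\defeq H_\delta(\mu,U(\mu,x))$. This map is continuous as a composition of continuous maps. Since $G(\mu,\slot)\push\mu=H_\delta(\mu,\slot)\push\lambda$, the triangle inequality gives $\sup_{\mu\in K}\sfW_p(G(\mu,\slot)\push\mu,F(\mu))<\varepsilon$.
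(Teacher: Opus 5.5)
Your proposal is correct, and its skeleton is the paper's: the same exact uniformizer $x\mapsto F_\mu(\varphi(\mu,x))$ onto $\mathrm{Unif}[0,1]$, followed by a smoothed step map onto a finitely supported measure with continuous weights.

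\textbf{Continuity of the uniformizer.} The paper (Lemma~\ref{lem:Uniform-approximate-uniformizer}) sandwiches $H_0$ between continuous smoothed maps $H_r$, with uniform error $\omega_K(r)$. Your argument is equally valid. It combines weak convergence of $\varphi(\mu_n,\slot)\push\mu_n$ with the equicontinuity bound $|F_{\mu_n}(t_n)-F_{\mu_n}(t)|\le\omega_K(|t_n-t|)$. P\'olya is unnecessary, since you only need convergence of the CDFs at the single continuity point $t=\varphi(\mu,x)$.

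\textbf{Discretization.} This is the genuine difference. The paper truncates outputs to $\overline{B_R(0)}$, which needs the uniform integrability Lemma~\ref{lem:uniform-truncation}. It then uses an $h$-net with a partition of unity on the \emph{output} space $\R^{d'}$, giving weights $\alpha_i(\mu)=\int\psi_i\,dF_R(\mu)$. You instead put the partition of unity on the \emph{input} space $K$. It is subordinate to preimages of a finite $\sfW_p$-cover of the compact set $F(K)$ by balls around finitely supported measures, and joint convexity of $\sfW_p^p$ controls the error.

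Your route is more elementary: it needs no truncation or tail estimate, only compactness of $F(K)$ and density of finitely supported measures. The paper's route yields the explicit bound of Remark~\ref{rem:quantitative-main}. There the number of atoms is controlled by a Euclidean covering number $N(R,h)\le C_{d'}(1+R/h)^{d'}$, and $\|G\|\le R$. In your version, $M$ depends on an unquantified $\sfW_p$-covering of $F(K)$. Both give a bounded $G$, which is what Theorem~\ref{thm:transformer-universality} uses.

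\textbf{One detail to make precise.} Your ``linear ramps'' must remain jointly continuous when several $c_k(\mu)$ coincide or lie within $\delta$ of one another. Take $\theta$ a continuous nondecreasing ramp from $0$ to $1$ and use the telescoping formula $H_\delta(\mu,s)=y_1+\sum_{k=1}^{M-1}\theta\bigl((s-c_k(\mu))/\delta\bigr)(y_{k+1}-y_k)$. This is jointly continuous. Because the $c_k$ are nondecreasing in $k$, it is also a convex combination of the $y_k$, so it plays the role of the paper's mollified $Q_\delta$.
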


\begin{proof}[Sketch of proof]
The full proof is given in Appendix~\ref{app:proof-main-theorem}.
Here we describe the construction together with the estimates used below.
For $R>0$, let
\[
F_R(\mu)\defeq (\Pi_R)_\#F(\mu),
\]
where $\Pi_R$ is the metric projection onto the closed ball $\overline{B_R(0)}$ of radius $R$. Define
\[
E_{{\rm tail},p}(R)
\defeq
\sup_{\nu\in F(K)}
\left(
\int_{\{|y|>R\}}|y|^p\,\nu(dy)
\right)^{1/p}.
\]
By Lemma~\ref{lem:uniform-truncation}, $E_{{\rm tail},p}(R)\to0$ as $R\to\infty$ and
\[
\sup_{\mu\in K}
\sfW_p\bigl(F(\mu),F_R(\mu)\bigr)
\le E_{{\rm tail},p}(R).
\]
Next, for $h>0$, choose an $h$-net $\{y_1,\ldots,y_N\}\subset\overline{B_R(0)}$ and a continuous partition of unity subordinate to the corresponding cover.
The covering number may be chosen so that
\[
N=N(R,h)
\le
C_{d'}
\left(1+\frac{R}{h}\right)^{d'},
\]
where $C_{d'}>0$ depends only on $d'$.
This yields continuous weights $\alpha_i\colon K\to[0,1]$ such that
\[
\sup_{\mu\in K}
\sfW_p\bigl(F_{R,h}(\mu),F_{R}(\mu)\bigr)
\le h, \qw F_{R,h}(\mu)
\defeq
\sum_{i=1}^N\alpha_i(\mu)\delta_{y_i}.
\]
Writing $s_i(\mu)\defeq \sum_{j=1}^i\alpha_j(\mu)$ and $ s_0(\mu)\defeq 0$,
the discontinuous interval map
\[
u\mapsto Q_0(\mu,u)
\defeq
\sum_{i=1}^N
\mathbf 1_{(s_{i-1}(\mu),s_i(\mu)]}(u)\,y_i
\]
pushes $\mathrm{Unif}[0,1]$ exactly onto $F_{R,h}(\mu)$.
We smooth the transition points of $Q_0$ on intervals of width $\delta>0$ to obtain a bounded continuous map $Q_\delta$ whose $\sfW_p$ contribution to errors in the transition regions is bounded by $2R\bigl(2N(R,h)\delta\bigr)^{1/p}$. Under Assumption~\ref{ass:uniform-level-set-condition}, the construction in Lemma~\ref{lem:Uniform-approximate-uniformizer} delivers the existence of $H_0\in \mathcal{C}(K\times \mathbb{R}^d, [0,1])$ such that
\[
H_0(\mu,\slot)_\#\mu = \mathrm{Unif}[0,1] \qfa \mu\in K.
\]
Combining this exact continuous uniformizer with the preceding arguments and defining
\[
G(\mu,x)
\defeq
Q_\delta\bigl(\mu, H_0(\mu,x)\bigr),
\]
the triangle inequality gives the asserted approximation after successively choosing $R$, $h$, and $\delta$ appropriately as functions of $\varepsilon$.
\end{proof}

For a fixed $\mu\in\mathcal{P}_{\rm na}(\R^d)$, it is known that the set of deterministic Monge couplings $(\mathrm{id}, g)\push\mu$ induced by continuous transport maps $g\colon\R^d\to\R^{d'}$ with source measure $\mu$ is dense in the set of all joint distributions in $\mathcal{P}(\R^d\times\R^{d'})$ with first marginal $\mu$ \cite[Prop.~2.2]{beiglbock2018denseness}. Theorem~\ref{thm:main-multi} can then be interpreted as the operator analog of this ``diagonal approximation'' result in which $\mu$ is no longer fixed, but is instead allowed to vary uniformly over the compact set $K$.

\begin{remark}[Quantitative error bound]
\label{rem:quantitative-main}
The preceding proof actually yields the upper bound
\[
\sup_{\mu\in K}
\sfW_p\bigl(
G(\mu,\slot)_\#\mu,
F(\mu)
\bigr)
\le
E_{{\rm tail},p}(R)
+
h
+
2R\left[
2C_{d'}
\left(1+\frac{R}{h}\right)^{d'}\delta
\right]^{1/p}.
\]
Moreover, the construction gives the uniform bound
$
\|G\|_{\mathcal{C}(K\times \mathbb{R}^d,\R^{d'})}\le R
$.
\end{remark}
\section{Applications to transformer architectures}
\label{sec:application}

We now apply our main approximation theorem to measure-theoretic transformer architectures.

\subsection{Measure-theoretic transformers}
We first recall the measure-theoretic formulation of transformers.

\begin{definition}[Measure-theoretic in-context attention maps]
For $H\in\N$, a \emph{measure-theoretic in-context multi-head attention map} $\Gamma_\theta \colon \mathcal P(\mathbb R^d) \times \mathbb{R}^d \to \mathbb R^d$ is defined by
\begin{equation}\label{eq-mf-self-attn}
\Gamma_\theta(\mu,x)
\defeq
x
+
\sum_{h=1}^H
W^h
\int_{\R^d}
\frac{
\exp\bigl(
\langle Q^h x, K^h y \rangle
\bigr)
}{
\int_{\R^d}
\exp\bigl(
\langle Q^h x, K^h z \rangle
\bigr)
\, \mu(dz)
}
\, V^h y \, \mu(dy),
\end{equation}
where $\theta\defeq (W^h,Q^h,K^h,V^h)_{h=1}^{H}$, $(\mu,x)\in
\mathcal{P}(\mathbb{R}^d)\times\mathbb{R}^d$, and $W^h$, $Q^h$, $K^h$, and $V^h$ are the head, query, key, and value matrices, respectively. See Remark~\ref{rmk-softmax} for a note about finiteness of \eqref{eq-mf-self-attn}.
\end{definition}

In \eqref{eq-mf-self-attn}, $\Gamma_\theta$ acts on a distinguished token $x$ while depending on the input probability measure $\mu$, and can therefore be viewed as an in-context map. 
The standard discrete attention mechanism is recovered when $\mu$ is an empirical measure $\mu=\frac{1}{n}\sum_{i=1}^n\delta_{x_i}$ associated with tokens $x_1,\ldots,x_n$. In this case, the integral in the definition of $\Gamma_\theta$ reduces to the usual finite sum over the tokens.
The measure-theoretic formulation removes the need to fix the number of tokens in advance and provides a unified description of attention for arbitrary probability measures, including non-atomic measures.
For further details on measure-theoretic formulations of attention and in-context maps, see, e.g., the work of \citet{castin2024smooth,furuya2025transformers,furuya2024transformers,geshkovski2025mathematical,geshkovski2023emergence,geshkovski2024measure,castin2025unified}.

\begin{definition}[Composition of measure-theoretic in-context maps]\label{defn:composition_incontext}
Let
\[
\Gamma_1\colon 
\mathcal P(\mathbb R^{d_0})\times\mathbb R^{d_0}
\to
\mathbb R^{d_1} \quad
\text{ and } \quad
\Gamma_2\colon 
\mathcal P(\mathbb R^{d_1})\times\mathbb R^{d_1}
\to
\mathbb R^{d_2}.
\]
Their \emph{composition} is defined by
\[
\bigl(\Gamma_2\diamond\Gamma_1\bigr)(\mu,x)
\defeq 
\Gamma_2\bigl(\Gamma_1(\mu,\slot)_\#\mu,\Gamma_1(\mu,x) \bigr).
\]
\end{definition}

\begin{definition}[Measure-theoretic transformer]
\label{def:transformer}
Let
$
\Gamma_{\theta_1},\ldots,\Gamma_{\theta_L}
$
be measure-theoretic in-context attention maps as in \eqref{eq-mf-self-attn} and
$
\phi_{1},\ldots,\phi_{L}
$
be context-free multilayer perceptron (MLP) maps.
The associated deep transformer in-context map $\sfT\colon 
\mathcal P(\mathbb R^d)\times\mathbb R^d
\to
\mathbb R^{d'}$ is defined by
\begin{equation}
\label{eq:in-cotext-map}
\sfT
\defeq 
\phi_L\diamond\Gamma_{\theta_L}
\diamond\cdots\diamond
\phi_1 \diamond\Gamma_{\theta_1}.
\end{equation}
The corresponding \emph{measure-theoretic transformer} is the measure-to-measure operator defined by
\begin{equation*}
\mathcal P(\mathbb R^d) \ni \mu
\mapsto  
\sfT(\mu,\slot)\push \mu \in \mathcal P(\mathbb R^{d'}).
\end{equation*}
\end{definition}

\Citet{furuya2024transformers} show that transformer in-context maps of the form \eqref{eq:in-cotext-map} are universal approximators for general continuous in-context mappings defined on compact token domains; the crucial difficulty is that the present paper works on the whole of $\R^d$, not a compact domain. Nevertheless, combining modifications of this universality result (Proposition~\ref{prop:transformer-universality-incontext-unbounded}) with Theorem~\ref{thm:main-multi} yields the following universal approximation result for measure-to-measure operators.

\begin{theorem}
\label{thm:transformer-universality}
Let $K \subset \mathcal{P}_p(\mathbb R^d)$ be a $\sfW_p$-compact set satisfying Assumption~\ref{ass:uniform-level-set-condition}.
Then for any $F\in \mathcal C(K,\mathcal P_p(\mathbb R^{d'}))$ and any $\varepsilon>0$, there exists a deep transformer in-context map $\sfT\colon \mathcal P(\mathbb R^d)\times\mathbb R^d \to \mathbb R^{d'}$ of the form \eqref{eq:in-cotext-map} such that
\[
\sup_{\mu\in K}
\sfW_p\bigl(
\sfT(\mu,\slot)_\#\mu,
F(\mu)
\bigr)
<\varepsilon.
\]
\end{theorem}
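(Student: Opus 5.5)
Our plan is a two-step approximation: first approximate $F$ by a continuous measure-dependent pushforward using Theorem~\ref{thm:main-multi}, then approximate the resulting in-context map by a transformer in-context map using the unbounded-domain universality result (Proposition~\ref{prop:transformer-universality-incontext-unbounded}).

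\emph{Step 1.} Fix $\varepsilon>0$. By Theorem~\ref{thm:main-multi} we obtain $G\in\mathcal C(K\times\mathbb R^d,\mathbb R^{d'})$ with $\sup_{\mu\in K}\sfW_p(G(\mu,\slot)_\#\mu,F(\mu))<\varepsilon/2$. By Remark~\ref{rem:quantitative-main}, $G$ may be taken bounded, $\|G\|_\infty\le R$. Boundedness is what allows the second step to operate on all of $\mathbb R^d$ and not only on a compact token domain.

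\emph{Step 2.} We apply Proposition~\ref{prop:transformer-universality-incontext-unbounded} to $G$. This should give a transformer in-context map $\sfT$ of the form \eqref{eq:in-cotext-map} that approximates $G$ in an averaged sense. We need the averaged form because uniform approximation on unbounded $\mathbb R^d$ is unavailable. Concretely, we want
\[
\sup_{\mu\in K}\int_{\mathbb R^d}|\sfT(\mu,x)-G(\mu,x)|^p\,\mu(dx)<(\varepsilon/2)^p.
\]
The natural coupling $(\sfT(\mu,\slot),G(\mu,\slot))_\#\mu$ then gives $\sfW_p(\sfT(\mu,\slot)_\#\mu,G(\mu,\slot)_\#\mu)<\varepsilon/2$, and the triangle inequality finishes the proof.

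\emph{Main obstacle.} If the proposition only gives uniform approximation on compact sets $K\times \overline{B_\rho(0)}$, we must control the tails ourselves. Since $K$ is $\sfW_p$-compact, its $p$-th moments are uniformly integrable, so $\sup_{\mu\in K}\mu(|x|>\rho)\to0$ as $\rho\to\infty$. Inside the ball, uniform closeness gives a small contribution. Outside the ball, the contribution is at most
\[
\sup_{\mu\in K}\int_{|x|>\rho}|\sfT-G|^p\,d\mu ,
\]
which requires a uniform bound on $\sfT$ away from the ball. For this we would append an MLP that clamps outputs to $\overline{B_{R+1}(0)}$, since the clamp is continuous and realizable by an MLP up to small error. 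If $\sfT$ grows at most linearly, a bound of the form $C\int_{|x|>\rho}(1+|x|)^p\,d\mu$ would also suffice. Because $K$ need not be compact in the weak-star sense on a compactified domain, the approximation must also be uniform in $\mu$ over $K$; this is where the modification of \citet{furuya2024transformers} is needed. We would lean entirely on Proposition~\ref{prop:transformer-universality-incontext-unbounded} for that point.
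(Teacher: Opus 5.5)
Your Step~1 matches the paper's. If Proposition~\ref{prop:transformer-universality-incontext-unbounded} is taken at face value, your tail argument then closes the proof, provided the clamp is done correctly. That proposition gives uniform approximation on $K\times\Omega$ only, with no global bound on $\sfT$. The correct clamp composes the last MLP $\phi_L$ with the exact coordinatewise ReLU clip onto $[-M_G,M_G]^{d'}$: this clip fixes $G$, is $1$-Lipschitz, and bounds the output by $\sqrt{d'}M_G$ everywhere. An MLP that only approximates the projection onto a ball on compact sets gives no global bound. Your linear-growth alternative also fails, because the growth constant of $\sfT$ may depend on $\rho$, which is fixed before $\sfT$.

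The genuine gap is that all of the substance is delegated to that proposition, and in the paper it has no independent proof. It is read off from \eqref{eqn:indep_interest_ua_bound} inside the proof of this very theorem, so citing it is circular. What is missing is exactly how to pass from Theorem~1 of \citet{furuya2024transformers} to measures on all of $\R^d$; that theorem needs a target continuous on all of $\mathcal P(\Omega)\times\Omega$ with $\Omega$ a compact token domain. Your diagnosis of this step is also off. $K$ is $\sfW_p$-compact, so its image under any continuous compactification is compact in $\mathcal P(\Omega)$; uniformity in $\mu$ is not where the difficulty lies.

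The paper's argument runs as follows.
\begin{itemize}
\item Compactify and extend. Take $\iota(x)=x/(1+|x|)$ and $\Omega=[-1,1]^d$. Then $\iota\push|_K$ is a homeomorphism onto the compact set $K^\iota$, so $G$ restricted to $K\times\overline{B_R(0)}$ transfers to a continuous map on the closed set $K^\iota\times\iota(\overline{B_R(0)})$. The Dugundji extension theorem extends it to $\widehat G_R\colon\mathcal P(\Omega)\times\Omega\to\overline{B_{M_G}(0)}$, and \citet{furuya2024transformers} then give $\sfT_\delta$ within $\delta$ of $\widehat G_R$ on all of $\mathcal P(\Omega)\times\Omega$.
\item Replace $\iota$ by an MLP. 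Since $\iota$ is not an MLP, it is replaced by $N_\eta=\operatorname{Clip}\circ\widetilde N_\eta$, with $N_\eta(\R^d)\subset\Omega$ and $|N_\eta-\iota|<\eta$ on $\overline{B_S(0)}$. This gives $\sup_{\mu\in K}\sfW_p((N_\eta)\push\mu,\iota\push\mu)\le\eta+2\sqrt d\,\tau_S^{1/p}$.
\item Conclude by uniform continuity. Set $\sfT\defeq\sfT_\delta\diamond N_\eta\diamond\Gamma_{\rm id}$. The modulus of uniform continuity of $\sfT_\delta$ on the compact set $\mathcal P(\Omega)\times\Omega$ yields $|\sfT(\mu,x)-G(\mu,x)|<\ep/8+\delta$ for $x\in B_R(0)$. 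Note that $S$ and $\eta$ must be chosen after $\sfT_\delta$.
\end{itemize}
Clipping on the input side also makes $\sfT$ globally bounded by $M_G+\delta$ for free, so no output clamp is needed. It further guarantees that every attention layer only sees compactly supported measures, which is what makes $\sfT$ well defined on $\mathcal P(\R^d)\times\R^d$ (Remark~\ref{rmk-softmax}). Your proposal leaves this point inside the black box as well.
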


See Appendix~\ref{app:transformer-universality} for the proof. Theorem \ref{thm:transformer-universality} proves that the measure-to-measure operators defined by transformers are universal approximators of continuous measure-to-measure operators defined on a compact set $K \subset \mathcal{P}_p(\R^d),$ provided that $K$ satisfies the uniform level set condition \eqref{eq:uniform-level-set-condition}. In many practical scenarios, the transformer $\mathsf{T}$ is not applied to $\mu\in K$ directly---which is atomless by Remark~\ref{rem-unif-level-set}---but instead to an atomic measure $\mu_N$ supported on $N$ samples from $\mu$. The error analysis in this case can still be handled by first controlling the $\sfW_p$ error between $\mathsf{T}(\mu_N,\slot)_{\#}\mu_N$ and $\mathsf{T}(\mu,\slot)_{\#}\mu$ uniformly over $K$ \citep{cole2026stability} and then applying Theorem~\ref{thm:transformer-universality} to bound the $\sfW_p$ error between $\mathsf{T}(\mu,\slot)_{\#}\mu$ and $F(\mu)$.
This approach requires Assumption~\ref{ass:uniform-level-set-condition} and discretizes $\mu\in K$ with samples. A different approach removes Assumption~\ref{ass:uniform-level-set-condition} on the $\sfW_p$-compact set $K$ altogether by convolving the (now possibly atomic) input measure $\mu\in K$ with a standard isotropic Gaussian distribution that has a small variance $\sigma^2$ to obtain a distribution $\mu_\sigma$; see Corollary~\ref{cor:noise_approx} in Appendix~\ref{app:additional-application} for details. Given samples $X_1, \ldots, X_N$ from $\mu$, one can combine the two approaches by approximating $\mu_{\sigma}$ by an empirical measure of the form $\widehat{\mu}_{\sigma} = \frac{1}{N} \sum_{i=1}^{N} \delta_{X_i + \sigma Z_i},$ where $Z_1, \ldots, Z_N$ are independent Gaussian samples with mean zero and identity covariance matrix.

\begin{remark}[Domain of measure-theoretic attention]\label{rmk-softmax}
    To be precise, the measure-theoretic attention map $\Gamma_{\theta}$ in \eqref{eq-mf-self-attn} is only well-defined for inputs $(\mu,x)$ such that 
    $$
    \int_{\R^d} \exp \bigl( \langle Q^h x, K^h y \rangle \bigr) \, \mu(dy) < \infty\qa \int_{\R^d} \exp \bigl( \langle Q^h x, K^h y\rangle \bigr) V^h y\, \mu(dy) < \infty
    $$
    for all $h\in \{1,\ldots, H\}$. These conditions always hold when $\mu$ has compact support; \citet{cole2026stability} give more general conditions on $\mu$ based on sub-Gaussianity. Although Theorem~\ref{thm:transformer-universality} applies to input measures $\mu\in K$ with possibly full support on $\R^d$, the specific construction only requires evaluating self-attention layers at compactly supported measures. This is because the transformer constructed in the proof of Theorem~\ref{thm:transformer-universality} utilizes an explicit initial MLP layer which truncates the input measure to a common compact set.
\end{remark}

\subsection{Continuously-varying source measures and connection to cross-attention}
The preceding result is an approximation theorem for measure-theoretic transformers based on self-attention, in which the source measure of the transport is the input measure itself. The next corollary generalizes the main pushforward approximation result, Theorem~\ref{thm:main-multi} from Section~\ref{sec-main-results}, to the setting in which the
source measure of the pushforward is allowed to depend continuously on the input
measure, but need not equal the input measure. This is particularly relevant for analyzing cross-attention
architectures, where the measure carrying the query tokens may differ
from the measure representing the context and may even live on a
different ambient space.

\begin{corollary}
\label{cor:main-cross}
Let $K\subset \mathcal P_p(\mathbb{R}^d)$ be $\sfW_p$-compact. Let $\eta\colon
K\to \mathcal P_p(\mathbb R^m)$ be $\sfW_p$-continuous. Assume that there exists
$\varphi\in
\mathcal C(K\times \mathbb R^m,\mathbb R)$
such that
\begin{equation}
\label{eq:cross-uniform-level-set}
\lim_{r\downarrow0}
\sup_{\mu\in K}\sup_{t\in\mathbb R}
\,
\eta(\mu)
\bigl(\{
x\in\mathbb R^m\colon
|\varphi(\mu,x)-t|\le r
\}
\bigr)
=0.
\end{equation}
Then for every $F\in
\mathcal C(
K,\mathcal P_p(\mathbb R^{d'}))$ and every $\varepsilon>0$, there exists $G\in\mathcal{C}(K\times \R^m,\mathbb R^{d'})$ such that
\[
\sup_{\mu\in K}
\sfW_p\bigl(
G(\mu,\slot)_\#\eta(\mu),
F(\mu)
\bigr)
<
\varepsilon.
\]
\end{corollary}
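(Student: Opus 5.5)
The plan is to rerun the proof of Theorem~\ref{thm:main-multi} with the source measure $\mu$ replaced by $\eta(\mu)$. The only place where the source enters is the exact uniformizer. Everything downstream depends only on $\mu$ through $F$: the truncation $F_R$, the discretization $F_{R,h}$ with continuous weights $\alpha_i(\mu)$, and the smoothed quantile-type map $Q_\delta(\mu,\slot)$. So I would first build a map $H_0\in\mathcal C(K\times\mathbb R^m,[0,1])$ with
\[
H_0(\mu,\slot)\push\eta(\mu)=\mathrm{Unif}[0,1] \qfa \mu\in K.
\]
I would then set $G(\mu,x)\defeq Q_\delta\bigl(\mu,H_0(\mu,x)\bigr)$. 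This gives
\[
G(\mu,\slot)\push\eta(\mu)=Q_\delta(\mu,\slot)\push\mathrm{Unif}[0,1].
\]
Hence the bound of Remark~\ref{rem:quantitative-main} transfers verbatim:
\[
E_{{\rm tail},p}(R)+h+2R\bigl[2C_{d'}(1+R/h)^{d'}\delta\bigr]^{1/p}<\varepsilon
\]
after choosing $R$, then $h$, then $\delta$.

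To obtain $H_0$, I would apply the same construction as in Lemma~\ref{lem:Uniform-approximate-uniformizer} to the compact set $\widetilde K\defeq\eta(K)$, which is compact since $\eta$ is continuous. The first step is to form the scalar pushforwards $\lambda_\mu\defeq\varphi(\mu,\slot)\push\eta(\mu)\in\mathcal P(\mathbb R)$. Condition \eqref{eq:cross-uniform-level-set} says exactly that
\[
\sup_{\mu\in K}\sup_{t}\lambda_\mu([t-r,t+r])\to0 .
\]
The CDFs $\Phi_\mu(t)\defeq\lambda_\mu((-\infty,t])$ are therefore continuous in $t$, with a modulus of continuity uniform in $\mu$. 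The natural candidate is
\[
H_0(\mu,x)\defeq\Phi_\mu\bigl(\varphi(\mu,x)\bigr).
\]
It pushes $\eta(\mu)$ to $\mathrm{Unif}[0,1]$ because $\Phi_\mu\push\lambda_\mu$ is uniform whenever $\Phi_\mu$ is continuous (the probability integral transform).

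The main obstacle is joint continuity of $(\mu,x)\mapsto H_0(\mu,x)$. I would argue it as follows. The map $\mu\mapsto\lambda_\mu$ is weakly continuous: if $\mu_n\to\mu$, then $\eta(\mu_n)\to\eta(\mu)$ in $\sfW_p$, hence the family is tight. Moreover, $\varphi(\mu_n,\slot)\to\varphi(\mu,\slot)$ uniformly on compacts, because $\varphi$ is continuous on the compact set $K\times\overline{B_M}$. Testing against bounded Lipschitz functions and using tightness then gives $\lambda_{\mu_n}\rightharpoonup\lambda_\mu$. Since the limit CDF is continuous, weak convergence upgrades to uniform convergence of CDFs: $\|\Phi_{\mu_n}-\Phi_\mu\|_\infty\to0$ (Pólya's theorem). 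Combined with the uniform equicontinuity of $\{\Phi_\mu\}$ and the continuity of $\varphi$, this yields
\[
|H_0(\mu_n,x_n)-H_0(\mu,x)|\le\|\Phi_{\mu_n}-\Phi_\mu\|_\infty+\omega\bigl(|\varphi(\mu_n,x_n)-\varphi(\mu,x)|\bigr)\to0,
\]
where $\omega$ is the uniform modulus of continuity of the CDFs.

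Finally, $Q_\delta$ is continuous in $(\mu,u)$ since it is built from the continuous partial sums $s_i(\mu)$. Therefore $G$ is continuous on $K\times\mathbb R^m$, and the moment and tail estimates involve only $F(K)$. This would complete the proof. It also recovers Theorem~\ref{thm:main-multi} as the special case $\eta=\mathrm{id}$, $m=d$.
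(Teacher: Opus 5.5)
Your proposal is correct, and its overall architecture matches the paper's. You build an exact uniformizer for $\eta(\mu)$, set $G=Q_\delta\bigl(\mu,H_0(\mu,\cdot)\bigr)$, and carry over the error bound from Theorem~\ref{thm:main-multi} unchanged.

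The one genuine difference is how you prove that $H_0(\mu,x)=\Phi_\mu\bigl(\varphi(\mu,x)\bigr)$ is jointly continuous.

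\textbf{The paper's route.} It reruns Lemma~\ref{lem:Uniform-approximate-uniformizer} with $\eta(\mu)$ as the source measure. It introduces the mollified CDFs $H_r(\mu,x)=\int\Theta\bigl((\varphi(\mu,x)-\varphi(\mu,z))/r\bigr)\,\eta(\mu)(dz)$, where $\Theta$ is a continuous ramp, and shows by a tightness argument that these are jointly continuous. It then uses the \emph{uniform} level set condition to get $\sup_{\mu,x}|H_r-H_0|\le\omega_K(r)\to0$. Hence $H_0$ is a uniform limit of continuous maps.

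\textbf{Your route.} You prove weak continuity of $\mu\mapsto\lambda_\mu$ and invoke P\'olya's theorem to get $\|\Phi_{\mu_n}-\Phi_\mu\|_\infty\to0$. Your final estimate only needs continuity of the single limit CDF $\Phi_\mu$, since the remaining term $|\Phi_\mu(\varphi(\mu_n,x_n))-\Phi_\mu(\varphi(\mu,x))|$ tends to zero by that alone. So your argument uses only the pointwise consequence of \eqref{eq:cross-uniform-level-set}, namely that each $\lambda_\mu$ is atomless; the uniform modulus $\omega$ is never needed.

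\textbf{What each buys.} Your route makes transparent that the uniformity is automatic on a compact $K$ (compare Remark~\ref{rem:uniform-level-set-condition-equiv}), at the cost of relying on a classical theorem. The paper's route is self-contained and gives an explicit bound $2\omega_K(r)$ on how well the continuous maps $H_r$ approximate $H_0$.

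One small caution concerns your phrase about applying the lemma to $\widetilde K=\eta(K)$. Do not take it literally: $\varphi$ is indexed by $\mu\in K$, and $\eta$ need not be injective, so $\varphi$ need not descend to a function on $\eta(K)\times\mathbb R^m$. Your explicit construction, indexed by $\mu\in K$ with $\eta(K)$ used only for tightness, avoids this, and that is the version you should write down.
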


Typical source measures covered by Corollary~\ref{cor:main-cross} include a fixed non-atomic measure, $\eta(\mu)\equiv\rho$ (e.g., an isotropic Gaussian), and a smoothed input measure, $\eta(\mu)=\rho\ast\mu$, where $\rho$ is absolutely continuous.
The proof of Corollary~\ref{cor:main-cross} follows the same construction as that of
Theorem~\ref{thm:main-multi} and may be found in Appendix~\ref{app:main-cross-proof}.

\begin{remark}[Relation to cross-attention]
\label{rem:cross-attention-architecture}
Cross-attention allows one collection of query tokens to attend to a distinct collection of context tokens and is a central component of encoder--decoder transformers \citep{vaswani2017attention} and related architectures such as the Perceiver~\citep{jaegle2021perceiver}.
Corollary~\ref{cor:main-cross} is directly connected to architectures in which the source tokens attend to the context tokens through cross-attention and are otherwise processed token-wise, without self-attention among the source tokens.
Combined with universality for measure-dependent in-context maps---see, e.g., Theorem~1 due to \citet{furuya2024transformers} or Proposition~\ref{prop:transformer-universality-incontext-unbounded}---it yields an analog of Theorem~\ref{thm:transformer-universality} for this particular cross-attention configuration. 
However, this argument does not by itself establish universality for more general architectures that combine both self-attention among source tokens with cross-attention among context tokens.
\end{remark}

\section{Conclusion}
In this paper, we studied the approximation of continuous measure-to-measure operators by continuous \emph{pushforward models}, a subset of mappings which are naturally motivated by transformers. We showed that atomic measures present an obstruction to universal approximation due to the inability of pushforward models to split or adjust the weights of atoms. To remove this obstruction, we identified a sufficient condition for universal approximation on a compact set $K \subset \mathcal{P}_p(\mathbb{R}^d),$ which we termed the uniform level set condition. Notably, the uniform level set condition is weaker than absolute continuity and allows for measures whose supports admit low-dimensional structure. Our main result also implies a uniform approximation theorem for measure-theoretic transformers.

This work is the first to study universal approximation properties of pushforward models under general assumptions on the target operator; however, several important questions still remain. First, while the uniform level set condition is sufficient for universal approximation, it may not be necessary. An interesting mathematical question is to identify the minimal assumptions required on a compact set $K \subset \mathcal{P}_p(\mathbb{R}^d)$ to enable universal approximation of continuous operators on $K$ by pushforward models.
Second, certain measure-to-measure operators of interest, such as the joint-to-conditional operator in Bayesian inference, may not be continuous \citep[Appendix~E]{tsimpos2026one}; approximation of such maps by pushforward models is thus beyond the scope of this paper. Last, an important problem is obtaining quantitative approximation results for measure-theoretic transformers, where the number of parameters is controlled by the smoothness of the target operator and the covering number of the domain of approximation. We leave these important directions to future work.

\clearpage

\subsection*{AI use statement}

In this work, we used generative AI tools to assist in the writing of proofs, formulating mathematical claims and refining them, providing critical ingredients for proving mathematical claims, and assist with translation.
We also used generative AI tools for checking proof arguments, improving the organization, presentation, readability, and structure of the paper, language editing, brainstorming, and literature search assistance.
We have not used generative AI tools to help develop theoretical models or conceptual frameworks, propose or refine hypotheses, design or provide feedback on research methodology or experiments, or interpret results. Moreover, we did not use generative AI tools to generate synthetic data sets, implement methods, clean and reformat datasets, or support qualitative and thematic data analysis; these tasks are not applicable to this theoretical work.

All mathematical statements and proofs suggested or modified with the assistance of generative AI were independently examined and verified by the authors. 
Relevant references and attribution were checked against the original sources, and the final mathematical arguments and wording were reviewed and revised by the authors. 
We take responsibility for the final content of this work, including text, claims, and other artifacts produced with the aid of generative AI.

\subsubsection*{Acknowledgments}
T.F. acknowledges funding from JSPS KAKENHI (grants JP24K16949
and 25H01453), JST CREST (JPMJCR24Q5), and JST ASPIRE (JPMJAP2329).
The research of N.H.N. is supported by a Klarman Fellowship through Cornell University's College of Arts \& Sciences and by startup funds at The University of Texas at Austin.
This project arose out of the 2nd Workshop on Machine Learning in Infinite Dimensions, which was sponsored by the ETH Z\"urich and the ProbAI Hub on the Mathematical and Computational Foundations of AI.

\bibliographystyle{preprint}
\bibliography{ref}


\clearpage

\appendix

\makeatletter
\@addtoreset{equation}{section}
\@addtoreset{theorem}{section}
\makeatother

\renewcommand{\theequation}{\thesection.\arabic{equation}}
\renewcommand{\thetheorem}{\thesection.\arabic{theorem}}

\setcounter{equation}{0}
\setcounter{theorem}{0}

\begin{center}
	{\bf Supplementary Material for:} \\ 
	\inserttitle \\
\end{center}

\section{Auxiliary results}
\label{app:additional}
This appendix contains auxiliary results and additional supporting material.

\subsection{Additional material for Section~\ref{sec:pushforward-approx}}\label{app:additional-pushforward-approx}
We now provide another example of a continuous measure-to-measure operator; this one is based on proximal maps. Proximal maps and minimizing-movement schemes can be defined on general metric spaces; see, e.g., the work of \citet{ambrosio2008gradient}. Here we consider such a construction on the $1$-Wasserstein space.
\begin{example}[$\sfW_1$-proximal map]
Let $\Omega\subset\mathbb{R}^d$ be compact with positive Lebesgue measure. Define the entropy functional
\[
\operatorname{Ent}(\nu)
\defeq 
\begin{cases}
\displaystyle
\int_\Omega \rho(x)\log \rho(x)\,dx,
& \text{if density}\  
\rho\defeq \frac{d\nu}{dx}
\ \text{exists},\\[2mm]
+\infty,
& \text{otherwise}.
\end{cases}
\]
For $\tau>0$ and $\mu\in\mathcal{P}_1(\Omega)$, consider the variational problem
\[
F_\tau(\mu)
\defeq 
\operatorname*{argmin}_{\nu\in\mathcal{P}_1(\Omega)}
\left\{
\operatorname{Ent}(\nu)
+
\frac{1}{2\tau}\sfW_1^2(\mu,\nu)
\right\}.
\]
Since $\Omega$ is compact, $\mathcal{P}_1(\Omega)=\mathcal{P}(\Omega)$ and this space is compact with respect to $\sfW_1$. 
Moreover, $\operatorname{Ent}$ is lower semicontinuous and strictly convex, while
\[
\nu\mapsto \sfW_1^2(\mu,\nu)
\]
is convex. 
Hence, the above minimization problem admits a unique minimizer.
Therefore, $F_\tau$ defines a measure-to-measure operator
\begin{align*}
F_\tau\colon
\mathcal{P}_1(\Omega)
&\to
\mathcal{P}_1(\Omega),\\
\mu&\mapsto F_\tau(\mu).
\end{align*}
We next observe that $F_\tau$ is continuous with respect to $\sfW_1$.
Indeed, let $\mu_n\to\mu$ in $\sfW_1$ and set
\[
\nu_n\defeq F_\tau(\mu_n).
\]
By compactness of $\mathcal{P}_1(\Omega)$, every subsequence of $(\nu_n)_{n\in\mathbb{N}}$ has a further subsequence, still denoted by $(\nu_n)_{n\in\mathbb{N}}$, such that
\[
\nu_n\to\nu_*
\qin \sfW_1
\]
for some $\nu_*\in\mathcal{P}_1(\Omega)$.
For every $\eta\in\mathcal{P}_1(\Omega)$, the minimizing property gives
\[
\operatorname{Ent}(\nu_n)
+
\frac{1}{2\tau}\sfW_1^2(\mu_n,\nu_n)
\le
\operatorname{Ent}(\eta)
+
\frac{1}{2\tau}\sfW_1^2(\mu_n,\eta).
\]
Using the lower semicontinuity of $\operatorname{Ent}$ and the continuity of $\sfW_1$, we obtain
\[
\operatorname{Ent}(\nu_*)
+
\frac{1}{2\tau}\sfW_1^2(\mu,\nu_*)
\le
\operatorname{Ent}(\eta)
+
\frac{1}{2\tau}\sfW_1^2(\mu,\eta).
\]
Since $\eta$ is arbitrary, $\nu_*$ is a minimizer of the variational problem associated with $\mu$. 
By uniqueness,
\[
\nu_*=F_\tau(\mu).
\]
Hence every convergent subsequence of $(\nu_n)_{n\in\mathbb{N}}$ has the same limit $F_\tau(\mu)$, and therefore
\[
F_\tau(\mu_n)\to F_\tau(\mu)
\qin \sfW_1.
\]
Thus,
\[
F_\tau\in
\mathcal C\bigl(
\mathcal{P}_1(\Omega),
\mathcal{P}_1(\Omega)
\bigr).
\]
This map also illustrates a limitation of exact deterministic pushforward representations on atomic measures.
Suppose that $0\in\Omega$ and consider the input
\[
\mu\defeq \delta_0.
\]
The minimum value in the definition of $F_\tau(\delta_0)$ is finite, since there exist absolutely continuous probability measures on $\Omega$ with finite entropy. Consequently,
\[
\operatorname{Ent}\bigl(F_\tau(\delta_0)\bigr)<\infty,
\]
and hence
\[
F_\tau(\delta_0)\ll dx.
\]
In particular, $F_\tau(\delta_0)$ is non-atomic. 
On the other hand, for
every deterministic map
\[
G\colon \mathcal{P}_1(\Omega)\times\Omega\to\Omega,
\]
we have
\[
G(\delta_0,\slot)_\#\delta_0
=
\delta_{G(\delta_0,0)},
\]
which is a Dirac measure. Therefore,
\[
F_\tau(\delta_0)
\neq
G(\delta_0,\slot)_\#\delta_0
\]
for every such $G$. Thus, $\sfW_1$-proximal maps provide natural examples of continuous
measure-to-measure operators which, on atomic inputs, cannot in general be
represented exactly by deterministic pushforward models.
\end{example}

\medskip

To conclude this appendix, we provide the proof of the lower bound asserted in Example~\ref{ex:diff-weight}.
\begin{proposition}
\label{prop:two_atom_counterexample}
Let
\[
\nu_{u,v}\defeq \frac12(\delta_u+\delta_v)\qa 
\nu^\ast\defeq \frac13\delta_{-1}+\frac23\delta_1,
\]
where $u\le v$. 
Then
\[
\inf_{u\le v}
\sfW_1(\nu_{u,v},\nu^\ast)
=
\frac13.
\]
The minimum is attained at $(u,v)=(-1,1)$.
\end{proposition}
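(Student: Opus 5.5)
We need $\inf_{u\le v}\sfW_1(\nu_{u,v},\nu^\ast)=\tfrac13$. The plan is to evaluate at $(u,v)=(-1,1)$ for the upper bound, and then use the one-dimensional formula $\sfW_1(\alpha,\beta)=\int_{\mathbb R}|F_\alpha(t)-F_\beta(t)|\,dt$, where $F_\alpha,F_\beta$ denote cumulative distribution functions, for the lower bound.

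\emph{Upper bound.} At $(u,v)=(-1,1)$ the two CDFs agree outside $[-1,1)$. On $[-1,1)$ they take the values $\tfrac12$ and $\tfrac13$ respectively. The integral is therefore $2\cdot\tfrac16=\tfrac13$. Equivalently, the optimal coupling moves mass $\tfrac16$ from $-1$ to $1$.

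\emph{Lower bound.} The first step is to pass to quantile functions. In one dimension, $\sfW_1(\alpha,\beta)=\int_0^1|F_\alpha^{-1}(s)-F_\beta^{-1}(s)|\,ds$. For $\nu_{u,v}$ with $u\le v$, the quantile function is $u$ on $(0,\tfrac12]$ and $v$ on $(\tfrac12,1]$. For $\nu^\ast$, it is $-1$ on $(0,\tfrac13]$ and $1$ on $(\tfrac13,1]$. This gives
\[
\sfW_1(\nu_{u,v},\nu^\ast)=\tfrac13|u+1|+\tfrac16|u-1|+\tfrac12|v-1|.
\]
The next step is to minimize this expression over $u\le v$. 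The term $\tfrac12|v-1|$ is minimized at $v=1$, and the constraint $u\le 1$ costs nothing there. The map $u\mapsto \tfrac13|u+1|+\tfrac16|u-1|$ is convex and piecewise linear with kinks at $\pm1$. On $[-1,1]$ its slope is $\tfrac13-\tfrac16>0$, so its minimum is at $u=-1$, where it equals $\tfrac16\cdot 2=\tfrac13$. Since every term is nonnegative and each is minimized simultaneously at $(-1,1)$, we get $\sfW_1\ge\tfrac13$ for all $u\le v$, with equality at $(-1,1)$.

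\emph{Main obstacle.} The only delicate point is justifying the quantile (monotone rearrangement) representation, including the case $u=v$. This is standard for measures on $\mathbb R$, and the ordering $u\le v$ ensures the quantile function is correctly written as above. If we prefer to avoid citing it, we can argue directly with couplings. Any coupling of $\nu_{u,v}$ with $\nu^\ast$ must send at least mass $\tfrac16$ of the atom at $u$ to the point $1$, or of the atom at $v$ to the point $-1$. A case check on the signs of $u$ and $v$ then gives the same bound, but the CDF/quantile route is cleaner, so we will use it.
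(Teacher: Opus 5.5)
Your proof is correct and follows the paper's route: the quantile representation gives the same expression $\tfrac13|u+1|+\tfrac16|u-1|+\tfrac12|v-1|$, which is then minimized. Your separable argument (minimize each term without the constraint, then note that $(-1,1)$ satisfies $u\le v$) is a slightly cleaner substitute for the paper's case split on $u\le 1$ versus $u>1$; to conclude the minimum of the $u$-part is at $-1$, also note its slope is $-\tfrac12<0$ on $(-\infty,-1)$.
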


\begin{proof}
In one dimension, the $1$-Wasserstein distance admits the quantile representation
\[
\sfW_1(\mu,\nu)
=
\int_0^1
|Q_\mu(t)-Q_\nu(t)|\,dt,
\]
where $Q_\mu$ and $Q_\nu$ denote the corresponding quantile functions.
For
\[
\nu_{u,v}=\frac12(\delta_u+\delta_v) \qa  u\le v,
\]
we have
\[
Q_{\nu_{u,v}}(t)
=
\begin{cases}
u, & 0<t<\frac12,\\[1mm]
v, & \frac12\le t<1,
\end{cases}
\]
whereas
\[
Q_{\nu^\ast}(t)
=
\begin{cases}
-1, & 0<t<\frac13,\\[1mm]
1, & \frac13\le t<1.
\end{cases}
\]
Hence
\begin{align*}
\sfW_1(\nu_{u,v},\nu^\ast)
&=
\int_0^{1/3}|u+1|\,dt
+
\int_{1/3}^{1/2}|u-1|\,dt
+
\int_{1/2}^{1}|v-1|\,dt
\\
&=
\frac13|u+1|
+
\frac16|u-1|
+
\frac12|v-1|.
\end{align*}
We now minimize this expression under the constraint $u\le v$. 
If $u\le 1$, the choice $v=1$ is admissible and minimizes the last term.
Thus it remains to minimize
\[
f(u)\defeq 
\frac13|u+1|
+
\frac16|u-1| \qf u\le1.
\]
For $u\le-1$,
\[
f(u)
=
-\frac12u-\frac16,
\]
which is decreasing as $u$ increases and therefore attains its minimum at $u=-1$.
For $-1\le u\le1$,
\[
f(u)
=
\frac12+\frac16u,
\]
which is increasing and hence again attains its minimum at $u=-1$.
Therefore,
\[
\inf_{u\le1} f(u)=f(-1)=\frac13.
\]
If $u>1$, then the constraint $v\ge u$ implies
\[
|v-1|=v-1\ge u-1.
\]
Consequently,
\begin{align*}
\sfW_1(\nu_{u,v},\nu^\ast)
&\ge
\frac13(u+1)
+
\frac16(u-1)
+
\frac12(u-1)
\\
&=
u-\frac13
>
\frac13.
\end{align*}
Thus the global minimum is $\frac13$, attained at
\[
(u,v)=(-1,1)
\]
as claimed.
\end{proof}

\subsection{Additional material for Section~\ref{sec:universal-approx}}\label{app:universal-approx}
As discussed in Section~\ref{sec:universal-approx}, the uniform level set condition \eqref{eq:uniform-level-set-condition} for a compact set $K$ and continuous scalarization $\varphi\in\mathcal{C}(K\times\R^d,\R)$ is equivalent to the condition that $\varphi(\mu,\slot)\push\mu$ is non-atomic for every $\mu\in K$. We now elaborate on the proof of this fact.
\begin{remark}[Equivalence of uniform level set condition and pointwise non-atomic family]
\label{rem:uniform-level-set-condition-equiv}
Let $K \subset \mathcal{P}_p(\mathbb{R}^d)$ be $\sfW_p$-compact. Let $\varphi\in \mathcal{C}(K\times \mathbb{R}^d ,\mathbb R)$. The following are equivalent:
\begin{align}
&\lim_{ r\downarrow0}
\sup_{\mu\in K}\sup_{t\in\mathbb R}
\mu\bigl(\{x\in\mathbb{R}^d\colon |\varphi(\mu,x)-t|\le  r\}\bigr)
=0\qa\label{eqn:equiv1}\\
&\mu\bigl(\{x\in\mathbb{R}^d\colon \varphi(\mu,x)=t\}\bigr)=0\qfa \mu\in K\quad\text{and \ all}\quad  t\in\R.\label{eqn:equiv2}
\end{align}
Indeed, \eqref{eqn:equiv1} implies \eqref{eqn:equiv2} by the argument leading to \eqref{eqn:pw_atomless} in the proof of Theorem~\ref{thm:main-multi}. To see the other direction, first define $\nu_\mu\defeq\varphi(\mu,\slot)\push\mu$. Let \eqref{eqn:equiv2} hold, which is equivalent to
\begin{align*}
    \nu_\mu\in\mathcal{P}_{\rm na}(\R)\qfa\mu\in K.
\end{align*}
Define
\[
L\defeq \{\nu_\mu\colon \mu\in K\}
\subset \mathcal{P}_{\rm na}(\R).
\]
We claim that the map
\[
\mu\mapsto\nu_\mu
\]
is continuous from $K\subset (\mathcal{P}_p(\R^d),\sfW_p)$ into $\mathcal{P}(\R)$ with the weak topology. To this end, let $\mu_n\to \mu$ in $\sfW_p$ and hence in distribution. Let $f$ be any continuous and bounded function on $\R$. Then
\begin{align*}
    \int_{\R^d}f\bigl(\varphi(\mu,x)\bigr)\,\mu_n(dx)\to \int_{\R^d}f\bigl(\varphi(\mu,z)\bigr)\,\mu(dz)\qas n\to\infty
\end{align*}
by the weak convergence of $\mu_n\to\mu$ because $f\circ\varphi(\mu,\slot)$ is bounded and continuous. On the other hand, for any $R>0$, it holds that
\begin{align*}
    \abs[\bigg]{\int_{\R^d}\Bigl[f\bigl(\varphi(\mu_n,x)\bigr) - f\bigl(\varphi(\mu,x)\bigr)\Bigr]\,\mu_n(dx)}&\leq \sup_{x\in \overline{B_R(0)}}\abs[\big]{f\bigl(\varphi(\mu_n,x)\bigr) - f\bigl(\varphi(\mu,x)\bigr)}\\
    &\qquad\qquad + 2\norm{f}_{\mathcal{C}(\R)} \sup_{k\in\N}\mu_k\bigl(\R^d\setminus B_R(0)\bigr)
\end{align*}
by splitting the integral. For fixed $R$, the first term on the right-hand side of the preceding display tends to zero as $n\to\infty$ by continuity of $f\circ\varphi$ and compactness of closed $\R^d$ balls. Then sending $R\to\infty$, the second term converges to zero by uniform tightness of the weakly convergent sequence $\{\mu_n\}_{n\in\N}$. An application of the triangle inequality then shows that
\begin{align*}
    \int_{\R}f(t)\,\nu_{\mu_n}(dt)=\int_{\R^d}f\bigl(\varphi(\mu_n,x)\bigr)\,\mu_n(dx)\to \int_{\R^d}f\bigl(\varphi(\mu,z)\bigr)\,\mu(dz)=\int_{\R}f(s)\,\nu_{\mu}(ds)
\end{align*}
as $n\to\infty$, so $\nu_{\mu_n}\to\nu_\mu$ in distribution. Thus, $L$ is compact in the weak topology. The assertion that \eqref{eqn:equiv1} holds then follows by exactly the same contradiction argument used in the proof of Example~\ref{ex:dominated-family-uniform-levelset}.
\end{remark}

\subsection{Additional material for Section~\ref{sec:application}}\label{app:additional-application}
By considering Gaussian convolutions, Theorem~\ref{thm:transformer-universality} can be generalized to compact sets $K$ which \emph{do not necessarily} satisfy the uniform level set condition; in particular, this generalization allows discrete measures to belong to $K$. To this end, given $\sigma > 0$ and $\mu \in \mathcal{P}(\R^d)$, let $\mu_{\sigma}$ denote the distribution of the random variable $X + \sigma Z,$ where $X \sim \mu$ and $Z \sim \mathcal{N}(0,I_d)$ are independent. That is,
\begin{align*}
    \mu_\sigma\defeq\mathrm{Law}(X + \sigma Z)=\int_{\R^d}\mathcal{N}(x,\sigma^2 I_d)\,\mu(dx)\,.
\end{align*}
For a compact set $K \subset \mathcal{P}_p(\R^d),$ define $$K_{\sigma} = \{\mu_{\sigma}\colon \mu \in K\}.$$
We first prove that the mapping $K \mapsto K_{\sigma}$ preserves $\sfW_p$-compactness.
\begin{lemma}\label{lem:noise_compact}
    If $K \subset \mathcal{P}_p(\R^d)$ is $\sfW_p$-compact and $\sigma > 0,$ then $K_{\sigma}$ is $\sfW_p$-compact. 
\end{lemma}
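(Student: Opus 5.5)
The plan is to show that the map $S_\sigma\colon\mu\mapsto\mu_\sigma$ is continuous, indeed $1$-Lipschitz, from $(\mathcal P_p(\R^d),\sfW_p)$ to itself. Then $K_\sigma=S_\sigma(K)$ is the continuous image of a compact set, and compactness follows at once. Before that, I check that $S_\sigma$ maps into $\mathcal P_p(\R^d)$. If $X\sim\mu$ and $Z\sim\mathcal N(0,I_d)$ are independent, Minkowski's inequality gives
\[
\Bigl(\E|X+\sigma Z|^p\Bigr)^{1/p}\le \Bigl(\int_{\R^d}|x|^p\,\mu(dx)\Bigr)^{1/p}+\sigma\bigl(\E|Z|^p\bigr)^{1/p}<\infty,
\]
since Gaussians have finite moments of all orders. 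Hence $\mu_\sigma\in\mathcal P_p(\R^d)$.

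For the Lipschitz estimate, fix $\mu,\nu\in\mathcal P_p(\R^d)$. Let $\pi\in\Pi(\mu,\nu)$ be an optimal coupling, which exists by standard optimal transport theory. Take $(X,Y)\sim\pi$ and $Z\sim\mathcal N(0,I_d)$ independent of $(X,Y)$. Then $X+\sigma Z\sim\mu_\sigma$ and $Y+\sigma Z\sim\nu_\sigma$, so the law of $(X+\sigma Z,Y+\sigma Z)$ couples $\mu_\sigma$ and $\nu_\sigma$. The noise cancels in the difference, which yields
\[
\sfW_p(\mu_\sigma,\nu_\sigma)^p\le \E\,|X+\sigma Z-(Y+\sigma Z)|^p=\E\,|X-Y|^p=\sfW_p(\mu,\nu)^p.
\]
If one prefers not to invoke existence of optimal couplings, the same argument with an $\epsilon$-optimal coupling gives the bound up to $\epsilon$, and letting $\epsilon\downarrow0$ gives the same conclusion.

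The last step is routine. Any sequence in $K_\sigma$ has the form $\bigl((\mu_n)_\sigma\bigr)_{n\in\N}$ with $\mu_n\in K$. By compactness of $K$, a subsequence satisfies $\mu_{n_k}\to\mu\in K$ in $\sfW_p$, and the $1$-Lipschitz bound then gives $(\mu_{n_k})_\sigma\to\mu_\sigma\in K_\sigma$ in $\sfW_p$. Since $\sfW_p$ is a metric, sequential compactness is equivalent to compactness here.

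I do not expect a real obstacle. The only points needing care are that $K_\sigma$ lies in $\mathcal P_p(\R^d)$, which the moment bound above handles, and the measurability of the synchronous-noise coupling, which is immediate on a product probability space.
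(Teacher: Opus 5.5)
Your proposal is correct and follows the paper's proof: both show that $\mu\mapsto\mu_\sigma$ is $1$-Lipschitz in $\sfW_p$ via the synchronous coupling $(X+\sigma Z,\,Y+\sigma Z)$, so $K_\sigma$ is the continuous image of a compact set. You also check that $\mu_\sigma\in\mathcal P_p(\R^d)$ and that $Z$ is independent of $(X,Y)$, which the paper leaves implicit.
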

\begin{proof}
    It suffices to show that the mapping $\mu \mapsto \mu_{\sigma}$ is $\sfW_p$-continuous. In fact, we will show that it is $1$-Lipschitz continuous uniformly in $\sigma$. Fix $\mu\in \mathcal{P}_p(\R^d)$ and $\nu \in \mathcal{P}_p(\R^d)$. Let $(X,Y)$ be random variables such that $X \sim \mu$, $Y \sim \nu$, and $X$ and $Y$ are $\sfW_p$-optimally coupled. If $Z \sim \mathcal{N}(0,I_d)$, then $(X+\sigma Z, Y + \sigma Z)$ defines a coupling of $\mu_{\sigma}$ and $\nu_{\sigma}.$ Therefore,
    \begin{align*}
        \sfW_p(\mu_{\sigma},\nu_{\sigma}) &\leq \bigl(\E\|(X+\sigma Z) - (Y + \sigma Z)\|^p\bigr)^{1/p} \\
        &= \bigl(\E\|X-Y\|^{p}\bigr)^{1/p} \\
        &= \sfW_p(\mu,\nu).
    \end{align*}
    This proves the desired claim.
\end{proof}

Moreover, $\mu_\sigma$ uniformly approximates $\mu$ over $K$ with an explicit rate.
\begin{lemma}\label{lem:noise_rate}
    If $K \subset \mathcal{P}_p(\R^d)$ is $\sfW_p$-compact and $\sigma > 0$, then
    \begin{align*}
        \sup_{\mu \in K} \sfW_p(\mu_\sigma,\mu) \leq \bigl(\E_{Z \sim \mathcal{N}(0,I_d)}\|Z\|^p\bigr)^{1/p}\sigma.
    \end{align*}
\end{lemma}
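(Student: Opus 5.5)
The plan is to use the obvious synchronous coupling between $\mu_\sigma$ and $\mu$: additive noise moves each sample by exactly $\sigma Z$. Fix $\mu\in K$ and let $X\sim\mu$ and $Z\sim\mathcal N(0,I_d)$ be independent, defined on a common probability space. Then $(X+\sigma Z, X)$ has first marginal $\mu_\sigma$ by definition and second marginal $\mu$. Its joint law is therefore an element of $\Pi(\mu_\sigma,\mu)$, and it is an admissible competitor in the infimum defining $\sfW_p$.

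Next, I will evaluate the transport cost of this coupling. The displacement is $(X+\sigma Z)-X=\sigma Z$, so the cost is $\E\|\sigma Z\|^p=\sigma^p\,\E\|Z\|^p$. Taking the $p$-th root gives
\[
\sfW_p(\mu_\sigma,\mu)\le \sigma\bigl(\E_{Z\sim\mathcal N(0,I_d)}\|Z\|^p\bigr)^{1/p}.
\]
The right-hand side is finite because Gaussian vectors have moments of every order. It also does not depend on $\mu$, so taking the supremum over $\mu\in K$ yields the asserted bound.

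The only points to check are routine. First, $\mu_\sigma\in\mathcal P_p(\R^d)$, which holds because $\bigl(\E|X+\sigma Z|^p\bigr)^{1/p}\le\bigl(\E|X|^p\bigr)^{1/p}+\sigma\bigl(\E|Z|^p\bigr)^{1/p}$ by Minkowski's inequality, so $\sfW_p(\mu_\sigma,\mu)$ is defined within the metric space $(\mathcal P_p(\R^d),\sfW_p)$. Second, the coupling is measurable. There is no real obstacle here. Compactness of $K$ is not even used; the estimate holds for every $\mu\in\mathcal P_p(\R^d)$.
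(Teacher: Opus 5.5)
Your proposal is correct and is the same argument as the paper's proof: the paper also couples $\mu$ and $\mu_\sigma$ via $(X, X+\sigma Z)$ with $X\sim\mu$ and $Z\sim\mathcal N(0,I_d)$ independent, and bounds $\sfW_p$ by the resulting cost $\sigma^p\,\E\|Z\|^p$. Your additional remarks are accurate: $\mu_\sigma\in\mathcal P_p(\R^d)$ by Minkowski's inequality, and the compactness of $K$ is never used.
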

\begin{proof}
    If $\mu \in K$, $X \sim \mu,$ and $Z \sim \mathcal{N}(0,I_d),$ then $(X,X+\sigma Z)$ defines a coupling of $\mu$ and $\mu_{\sigma}$. Therefore,
    \begin{align*}
        \sup_{\mu \in K} \sfW_p(\mu,\mu_{\sigma}) &\leq \sup_{\mu \in K} \bigl(\E_{X \sim \mu, Z \sim \mathcal{N}(0,I_d)}\|X-(X+\sigma Z)\|^p\bigr)^{1/p} \\
        &= \bigl(\E_{Z \sim \mathcal{N}(0,I_d)}\|Z\|^p\bigr)^{1/p} \sigma.
    \end{align*}
    This completes the proof.
\end{proof}

Notice that $K_{\sigma} \subset \mathcal{P}_{\rm ac}(\R^d)$ for any $\sigma > 0,$ even if $K$ contains singular or atomic measures. Application of Theorem~\ref{thm:transformer-universality} to $K_{\sigma}$, together with the preceding lemmas, yields the following generalization to Theorem~\ref{thm:transformer-universality} that is valid for \emph{any} $\sfW_p$-compact set $K$; in particular, the uniform level set condition \eqref{eq:uniform-level-set-condition} is \emph{not required}.
\begin{corollary}\label{cor:noise_approx}
Fix $p \in [1,\infty)$. Let $F\colon \mathcal{P}_p(\R^d) \rightarrow \mathcal{P}_p(\R^{d'})$ be $\sfW_p$-continuous. Let $K \subset \mathcal{P}_p(\R^d)$ be $\sfW_p$-compact. For any $\varepsilon > 0$, there exists $\sigma > 0$ and a transformer $\mathsf{T}$ of the form \eqref{eq:in-cotext-map} such that
\begin{align*}
    \sup_{\mu \in K} \sfW_p\bigl(\mathsf{T}(\mu_\sigma,\slot)\push\mu_\sigma, F(\mu)\bigr) < \varepsilon.
\end{align*}
\end{corollary}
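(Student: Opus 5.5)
The plan is to reduce the corollary to Theorem~\ref{thm:transformer-universality} applied on the smoothed family $K_\sigma$. The triangle inequality splits the error as
\[
\sfW_p\bigl(\sfT(\mu_\sigma,\slot)\push\mu_\sigma, F(\mu)\bigr)
\le
\sfW_p\bigl(\sfT(\mu_\sigma,\slot)\push\mu_\sigma, F(\mu_\sigma)\bigr)
+
\sfW_p\bigl(F(\mu_\sigma),F(\mu)\bigr).
\]
I would first make the second term small uniformly in $\mu\in K$ by choosing $\sigma$. Then, with $\sigma$ fixed, I would make the first term small by choosing $\sfT$.

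\textbf{Choosing $\sigma$.} By Lemma~\ref{lem:noise_rate}, $\sup_{\mu\in K}\sfW_p(\mu_\sigma,\mu)\le c_p\sigma$ with $c_p=(\E\|Z\|^p)^{1/p}$. For $\sigma\in(0,1]$, all the sets $K_\sigma$ lie in the $c_p$-neighborhood of $K$. That neighborhood is not compact, so I would instead use the set
\[
\widetilde K\defeq K\cup\bigcup_{\sigma\in(0,1]}K_\sigma=\{\mu_\sigma\colon \mu\in K,\ \sigma\in[0,1]\},
\]
with the convention $\mu_0=\mu$. This set is $\sfW_p$-compact because $(\mu,\sigma)\mapsto\mu_\sigma$ is continuous on $K\times[0,1]$. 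Continuity in $\mu$ holds uniformly in $\sigma$ by Lemma~\ref{lem:noise_compact}. Continuity in $\sigma$ follows from the coupling $(X+\sigma Z,X+\sigma' Z)$, which gives $\sfW_p(\mu_\sigma,\mu_{\sigma'})\le c_p|\sigma-\sigma'|$. Hence $F$ is uniformly continuous on the compact set $\widetilde K$. This gives $\sigma\in(0,1]$ with $\sfW_p(F(\mu_\sigma),F(\mu))<\varepsilon/2$ for all $\mu\in K$.

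\textbf{Checking the hypothesis for $K_\sigma$.} With $\sigma$ fixed, Lemma~\ref{lem:noise_compact} makes $K_\sigma$ $\sfW_p$-compact. Each $\mu_\sigma$ has a density bounded by $(2\pi\sigma^2)^{-d/2}$. I would verify Assumption~\ref{ass:uniform-level-set-condition} directly with $\varphi(x)=x_1$. The first marginal of $\mu_\sigma$ is a one-dimensional Gaussian mixture with density at most $(2\pi\sigma^2)^{-1/2}$. Therefore every slab $\{|x_1-t|\le r\}$ has mass at most $2r(2\pi\sigma^2)^{-1/2}$, uniformly over $K_\sigma$ and $t$. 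Alternatively one could invoke Example~\ref{ex:dominated-family-uniform-levelset} with $\rho$ a standard Gaussian, but the direct bound is simpler.

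\textbf{Choosing $\sfT$.} The map $F|_{K_\sigma}$ is continuous into $\mathcal P_p(\R^{d'})$. Theorem~\ref{thm:transformer-universality} then gives $\sfT$ with $\sup_{\nu\in K_\sigma}\sfW_p(\sfT(\nu,\slot)\push\nu,F(\nu))<\varepsilon/2$. Taking $\nu=\mu_\sigma$ and combining with the bound from the choice of $\sigma$ yields the claim.

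The main obstacle is the uniformity in $\mu$ when choosing $\sigma$. Pointwise continuity of $F$ is not enough, which is why I compactify over $\sigma\in[0,1]$ before using uniform continuity. The other steps are direct applications of earlier results.
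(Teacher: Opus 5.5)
Your proposal is correct and follows the paper's proof: you use the same triangle-inequality split, choose $\sigma$ via Lemma~\ref{lem:noise_rate} and uniform continuity of $F$ near $K$, and then apply Lemma~\ref{lem:noise_compact} and Theorem~\ref{thm:transformer-universality} on $K_\sigma$. You differ in proving the uniform-continuity-near-$K$ step yourself via the compact set $\{\mu_\sigma\colon \mu\in K,\ \sigma\in[0,1]\}$ (the paper cites an external lemma) and in verifying the level set condition directly with $\varphi(x)=x_1$ and the bounded Gaussian-mixture marginal density (the paper invokes Example~\ref{ex:dominated-family-uniform-levelset}); both substitutions are valid and self-contained, and choosing $\sigma$ before $\sfT$ is the logically clean order.
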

\begin{proof}
    Let $\varepsilon>0$. Fix $\sigma>0$ to be determined. By Lemma~\ref{lem:noise_compact}, $K_{\sigma}$ is a $\sfW_p$-compact subset of $\mathcal{P}_{\rm ac}(\R^d).$ Thus, by Example~\ref{ex:dominated-family-uniform-levelset}, $K_{\sigma}$ satisfies the uniform level set condition \eqref{eq:uniform-level-set-condition} from Assumption~\ref{ass:uniform-level-set-condition}. Consequently, by Theorem~\ref{thm:transformer-universality}, there exists a deep transformer in-context map $\sfT=\sfT_{\sigma,\ep}\colon \mathcal{P}(\R^d)\times\R^d\to\R^{d'}$ of the form \eqref{eq:in-cotext-map} such that
    \[
    \sup_{\nu\in K_\sigma}
    \sfW_p\bigl(
    \sfT(\nu,\slot)_\#\nu,
    F(\nu)
    \bigr)
    <\frac{\varepsilon}{2}.
    \]
    By Lemma~\ref{lem:noise_rate}, the continuity of $F$, and a result on uniform continuity near compact sets \citep[Lemma~C.2]{tsimpos2026one}, we may choose $\sigma=\sigma(\ep)$ such that
	\begin{align*}
		\sup_{\mu \in K}\sfW_p\bigl(F(\mu_\sigma), F(\mu)\bigr)<\frac{\ep}{2}\,.
	\end{align*}
    Application of the triangle inequality to obtain
    \begin{align*}
        \sup_{\mu \in K} \sfW_p\bigl(\mathsf{T}(\mu_\sigma,\slot)\push\mu_\sigma, F(\mu)\bigr)\leq \sup_{\nu\in K_\sigma}
    \sfW_p\bigl(
    \sfT(\nu,\slot)_\#\nu,
    F(\nu)
    \bigr) + \sup_{\mu \in K}\sfW_p\bigl(F(\mu_\sigma), F(\mu)\bigr) <\ep
    \end{align*}
    completes the proof.
\end{proof}

\medskip

We conclude with the following observation.
The proof of Theorem~\ref{thm:transformer-universality}---in particular the analysis leading to \eqref{eqn:indep_interest_ua_bound}---implies an independently interesting result on the universality of deep transformer in-context maps over $\sfW_p$-compact sets $K\subset\mathcal{P}_p(\R^d)$ of probability measures supported on \emph{the whole of} $\R^d$ instead of on a compact token domain. We state this result as a proposition.
\begin{proposition}
\label{prop:transformer-universality-incontext-unbounded}
Fix $p\in[1,\infty)$. Let $\Omega\subset\R^d$ be compact and $K \subset \mathcal{P}_p(\mathbb R^d)$ be $\sfW_p$-compact.
Then for any $G\in \mathcal{C}(K\times\Omega,\R^{d'})$ and any $\varepsilon>0$, there exists a deep transformer in-context map $\sfT\colon \mathcal{P}(\mathbb R^d)\times\mathbb R^d \to \mathbb R^{d'}$ of the form \eqref{eq:in-cotext-map} such that
    \begin{align*}
        \sup_{(\mu,x)\in K\times\Omega}
        \abs{\sfT(\mu,x)- G(\mu,x)}
        <\varepsilon.
    \end{align*}
\end{proposition}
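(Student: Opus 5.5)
The plan is to reduce Proposition~\ref{prop:transformer-universality-incontext-unbounded} to the compact-token-domain universality theorem of \citet{furuya2024transformers} (their Theorem~1). The first layer of the transformer will push every input measure onto a fixed compact set; after that, the known result can be applied on a compact domain.

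\emph{Step 1: truncating the context.} Fix $R>0$ large, with $\Omega\subset B_R(0)$. Let $\tau_R\colon\R^d\to\R^d$ be a continuous map with $\tau_R=\mathrm{id}$ on $\overline{B_R(0)}$ and image contained in $\overline{B_{R+1}(0)}$; the metric projection $\Pi_R$ is one choice. Then $\tau_R$ is $1$-Lipschitz, so
\[
\sfW_p\bigl((\tau_R)\push\mu,\mu\bigr)\le\Bigl(\int_{\{|x|>R\}}|x|^p\,\mu(dx)\Bigr)^{1/p}.
\]
Since $K$ is $\sfW_p$-compact, its $p$-th moments are uniformly integrable, so this bound tends to $0$ uniformly over $K$ as $R\to\infty$ (the same argument as Lemma~\ref{lem:uniform-truncation}). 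Set $\Omega'\defeq\overline{B_{R+1}(0)}$ and $K_R\defeq\{(\tau_R)\push\mu\colon\mu\in K\}$. The set $K_R$ is a $\sfW_p$-compact subset of $\mathcal P(\Omega')$, because it is the continuous image of $K$. On compact $\Omega'$, $\sfW_p$ metrizes the weak topology.

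\emph{Step 2: transferring the target.} Define $\widetilde G$ on $K_R\times\Omega'$ that approximates $G$. The difficulty is that $\mu\mapsto(\tau_R)\push\mu$ need not be injective, so $G$ does not descend directly. My first attempt avoids descending $G$ exactly. By uniform continuity of $G$ on the compact set $K\times\Omega$, choose $\delta>0$ such that $|G(\mu,x)-G(\mu',x)|<\varepsilon/3$ whenever $\sfW_p(\mu,\mu')<\delta$. Then choose $R$ so large that $\sfW_p((\tau_R)\push\mu,\mu)<\delta/2$ for all $\mu\in K$.

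Next extend $G$ continuously, by Dugundji/Tietze, to a neighborhood in $\mathcal P_p(\R^d)\times\R^d$, or better to all of $\mathcal P_p(\R^d)\times\R^d$. Let $\widetilde G\defeq G^{\rm ext}$ restricted to $K_R\times\Omega'$. Here $K_R\subset\mathcal P_p(\R^d)$ is compact. Choosing $R$ via the uniform continuity of $G^{\rm ext}$ on the compact set $(K\cup K_R)\times\Omega$ is circular, since $K_R$ depends on $R$. I would resolve this by using $\widehat K\defeq K\cup\bigcup_{R'\ge R_0}K_{R'}$, which is compact because $K_{R'}\to K$ uniformly. Then apply uniform continuity of $G^{\rm ext}$ on $\widehat K\times\Omega$. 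This gives
\[
|\widetilde G((\tau_R)\push\mu,x)-G(\mu,x)|<\varepsilon/3 \quad\text{on } K\times\Omega.
\]

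\emph{Step 3: realizing the transformer.} Apply Theorem~1 of \citet{furuya2024transformers} to $\widetilde G$ on the compact token domain $\Omega'$ and the compact set $K_R$. This gives a transformer $\sfT_0$ with
\[
\sup_{K_R\times\Omega'}|\sfT_0-\widetilde G|<\varepsilon/3.
\]
Then realize $\tau_R$, or an MLP $\phi_0$ that is uniformly close to it on a large ball, as the first context-free MLP layer, with the attention in that layer switched off by setting $W^h=0$. Set $\sfT\defeq\sfT_0\diamond\phi_0$.

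Realizing $\tau_R$ exactly by an MLP is delicate in $d>1$. A ReLU MLP can implement coordinatewise clipping exactly, which is an admissible $\tau_R$ onto a cube; the same tail estimate holds for it. With this choice the composition evaluates $\sfT_0$ at $((\tau_R)\push\mu,\tau_R(x))=((\tau_R)\push\mu,x)$ for $x\in\Omega$. The triangle inequality then gives error $<\varepsilon$.

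\emph{Main obstacle.} I expect the main obstacle to be Step~2: making the continuity and truncation interplay rigorous uniformly, and checking that the cited theorem's input class (measures on a compact domain, with the weak topology) matches $K_R$. A secondary point is confirming that clipping MLPs fit the architecture \eqref{eq:in-cotext-map}, and that all attention evaluations are at compactly supported measures, so that they are finite (Remark~\ref{rmk-softmax}).
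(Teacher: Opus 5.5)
Your argument is correct, but it compactifies the token space differently from the paper. The paper's argument is the part of the proof of Theorem~\ref{thm:transformer-universality} leading to \eqref{eqn:indep_interest_ua_bound}.
\begin{itemize}
\item \emph{The paper's route.} It uses the \emph{injective} map $\iota(x)=x/(1+|x|)$. Then $\iota\push|_K$ is a homeomorphism onto its image, and $G$ descends \emph{exactly} to $K^\iota\times\iota(\Omega)$. After a Dugundji extension to $\cP([-1,1]^d)\times[-1,1]^d$ and Theorem~1 of \citet{furuya2024transformers}, it must pay for $\iota$ not being an MLP. It replaces $\iota$ by $N_\eta=\operatorname{Clip}\circ\widetilde N_\eta$. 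It controls the resulting error through the uniform continuity of the transformer $\sfT_\delta$ on the compact space, which needs a separately cited continuity result for transformer in-context maps, plus a tightness bound on $\sfW_p((N_\eta)\push\mu,\iota\push\mu)$.
\item \emph{Your route.} You use a \emph{non-injective} truncation that a ReLU network realizes \emph{exactly}. The first layer therefore adds no error, and you need no continuity property of $\sfT_0$ at all. The price is that $G$ no longer descends. You handle this by extending $G$ to $\cP_p(\R^d)\times\R^d$ and combining uniform tail control of $p$-th moments with uniform continuity near a compact set.
\end{itemize}
Your set $\widehat K$ is indeed compact: along a sequence, take $R_n\to R^*\in[R_0,\infty]$ and use $\sup_x|\tau_R(x)-\tau_{R'}(x)|\le\sqrt d\,|R-R'|$. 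The circularity you worried about can also be avoided directly. A continuous function on a metric space is uniformly continuous at any compact subset, which already gives a $\delta$ independent of $R$.

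Three small repairs are needed.
\begin{itemize}
\item With coordinatewise clipping the image is $[-R,R]^d$, not a subset of $\overline{B_{R+1}(0)}$. Take $\Omega'=[-R,R]^d$.
\item The cited Theorem~1 approximates a target defined on all of $\cP(\Omega')\times\Omega'$. Apply it to $G^{\rm ext}$ restricted to $\cP(\Omega')\times\Omega'$, not to a function on $K_R\times\Omega'$ only. That restriction is weakly continuous, because $\sfW_p$ metrizes the weak topology there.
\item Switch off the first attention layer with $Q^h=K^h=V^h=0$, not $W^h=0$ alone. For $\mu\in K$ with only a finite $p$-th moment, the softmax integrals in \eqref{eq-mf-self-attn} can diverge (cf.\ Remark~\ref{rmk-softmax}).
\end{itemize}
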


\section{Proofs of results in Section~\ref{sec:universal-approx}}
This appendix provides the remaining proofs of results from Section~\ref{sec:universal-approx} in the main text.

\subsection{Proof of Example~\ref{ex:one-dimensional-uniform-levelset}}
\label{app:one-dimensional-uniform-levelset}

\begin{proof}
Since $\varphi(x)=x$, we have
\[
\mu\bigl(
\{x\in\mathbb R\colon |\varphi(x)-t|\le r\}
\bigr)
=
\mu\bigl([t- r,t+ r]\bigr).
\]
Thus, it is enough to prove that
\[
\lim_{ r\downarrow0}
\sup_{\mu\in K}\sup_{t\in\mathbb R}
\mu\bigl([t- r,t+ r]\bigr)
=0.
\]
Suppose by contradiction that this fails.
Then there exist
\[
\varepsilon>0,\qquad
 r_n\downarrow0,\qquad
\mu_n\in K,\qquad
t_n\in\mathbb R
\]
such that
\[
\mu_n\bigl([t_n- r_n,t_n+ r_n]\bigr)
\ge\varepsilon
\qfa n\ge1.
\]
Since $K$ is compact, it is tight.
Hence there exists $R>0$ such that
\[
\mu\bigl([-R,R]\bigr)>1-\frac{\varepsilon}{2}
\qfa \mu\in K.
\]
We claim that $(t_n)_{n\in\N}$ is bounded.
Indeed, if $(t_n)_{n\in\N}$ were unbounded, then, after passing to a subsequence,
\[
|t_n|\to\infty.
\]
Since $ r_n\to0$, for all sufficiently large $n$, it holds that
\[
[t_n- r_n,t_n+ r_n]
\subset
\mathbb R\setminus[-R,R].
\]
Therefore
\[
\mu_n\bigl([t_n- r_n,t_n+ r_n]\bigr)
\le
\mu_n\bigl(\mathbb R\setminus[-R,R]\bigr)
<
\frac{\varepsilon}{2},
\]
which contradicts the choice of $\mu_n$ and $t_n$.
Thus $(t_n)_{n\in\N}$ is bounded. Passing to a subsequence, we may assume that
\[
t_n\to t_*
\]
for some $t_*\in\mathbb R$.
Since $K$ is compact, after passing to a further subsequence,
\[
\mu_n\to\mu_*
\]
for some $\mu_*\in K$. Since $\mu_*$ is non-atomic,
\[
\mu_*\bigl(\{t_*\}\bigr)=0.
\]
Therefore, by continuity from above,
\[
\mu_*\bigl([t_*-\delta,t_*+\delta]\bigr)
\to
\mu_*\bigl(\{t_*\}\bigr)=0
\qas \delta\downarrow0.
\]
Hence we may choose $\delta>0$ such that
\[
\mu_*\bigl([t_*-\delta,t_*+\delta]\bigr)
<
\frac{\varepsilon}{2}.
\]
For all sufficiently large $n$, it holds that
\[
[t_n- r_n,t_n+ r_n]
\subset
[t_*-\delta,t_*+\delta].
\]
Consequently,
\[
\varepsilon
\le
\mu_n\bigl([t_*-\delta,t_*+\delta]\bigr).
\]
Since $[t_*-\delta,t_*+\delta]$ is closed, the Portmanteau theorem gives
\[
\limsup_{n\to\infty}
\mu_n\bigl([t_*-\delta,t_*+\delta]\bigr)
\le
\mu_*\bigl([t_*-\delta,t_*+\delta]\bigr).
\]
It follows that
\[
\varepsilon
\le
\mu_*\bigl([t_*-\delta,t_*+\delta]\bigr)
<
\frac{\varepsilon}{2},
\]
a contradiction.
Therefore,
\[
\lim_{ r\downarrow0}
\sup_{\mu\in K}\sup_{t\in\mathbb R}
\mu\bigl([t- r,t+ r]\bigr)
=0,
\]
which proves the assertion.
\end{proof}

\subsection{Proof of Example~\ref{ex:dominated-family-uniform-levelset}}
\label{app:dominated-family-uniform-levelset}

\begin{proof}
Since $\rho$ is non-atomic, by  Lemma~\ref{lem:generic-nonatomic-projection} there exists
$v\in\mathbb S^{d-1}$ such that
\[
x\mapsto \varphi(x)\defeq x\cdot v
\]
satisfies
\[
\varphi_\#\rho\in\mathcal P_{\rm na}(\mathbb R).
\]
Fix $\mu\in K$.
Since $\mu\ll\rho$, for every Borel set $B\subset\mathbb R$, it holds that
\[
(\varphi_\#\rho)(B)=0\quad\text{implies}\quad  \rho\bigl(\varphi^{-1}(B)\bigr)=0.
\]
Hence
\[
\mu\bigl(\varphi^{-1}(B)\bigr)=0,
\]
and therefore
\[
(\varphi_\# \mu)(B)=0.
\]
Thus
\[
\varphi_\# \mu\ll\varphi_\# \rho.
\]
Since $\varphi_\#\rho$ is non-atomic, $\varphi_\#\mu$ is also non-atomic.
Define
\[
L\defeq \{\varphi_\#\mu\colon \mu\in K\}
\subset \mathcal P_p(\mathbb R).
\]
Since $\varphi$ is Lipschitz, the map
\[
\mu\mapsto\varphi_\#\mu
\]
is continuous. 
Therefore $L$ is compact in $\mathcal P_p(\mathbb R)$.
Moreover,
\[
L\subset\mathcal P_{\rm na}(\mathbb R).
\]
We claim that
\[
\lim_{ r\downarrow0}
\sup_{\nu\in L}\sup_{t\in\mathbb R}
\nu\bigl([t- r,t+ r]\bigr)
=0.
\]
Suppose by contradiction that this fails.
Then there exist
$\varepsilon>0$,
$ r_n\downarrow0$,
$\nu_n\in L$,
and $t_n\in\mathbb R$
such that
\[
\nu_n\bigl([t_n- r_n,t_n+ r_n]\bigr)
\ge \varepsilon
\qfa n\ge1.
\]
Since $L$ is compact, it is tight.
Hence there exists $R>0$ such that
\[
\nu\bigl([-R,R]\bigr)>1-\frac{\varepsilon}{2}
\qfa \nu\in L.
\]
Consequently, $(t_n)_{n\in\N}$ must be bounded; otherwise,
for infinitely many $n$, it would hold that
\[
[t_n- r_n,t_n+ r_n]
\subset \mathbb R\setminus[-R,R],
\]
which would imply
\[
\nu_n\bigl([t_n- r_n,t_n+ r_n]\bigr)
\le \frac{\varepsilon}{2}.
\]
This is a contradiction. Now passing to a subsequence,
\[
t_n\to t_*
\]
for some $t_*\in\mathbb R$.
Since $L$ is compact, after passing to a further
subsequence,
\[
\nu_n \to \nu_*
\]
for some $\nu_*\in L$.
Because $\nu_*$ is non-atomic,
\[
\nu_*\bigl(\{t_*\}\bigr)=0.
\]
Hence there exists $\delta>0$ such that
\[
\nu_*\bigl([t_*-\delta,t_*+\delta]\bigr)
<\frac{\varepsilon}{2}.
\]
For sufficiently large $n$,
\[
[t_n- r_n,t_n+ r_n]
\subset
[t_*-\delta,t_*+\delta].
\]
Therefore
\[
\varepsilon
\le
\nu_n\bigl([t_*-\delta,t_*+\delta]\bigr).
\]
Since $[t_*-\delta,t_*+\delta]$ is closed, the Portmanteau theorem yields
\[
\limsup_{n\to\infty}
\nu_n\bigl([t_*-\delta,t_*+\delta]\bigr)
\le
\nu_*\bigl([t_*-\delta,t_*+\delta]\bigr).
\]
Hence
\[
\varepsilon
\le
\nu_*\bigl([t_*-\delta,t_*+\delta]\bigr),
\]
contradicting the choice of $\delta$.
Thus
\[
\lim_{ r\downarrow0}
\sup_{\mu\in K}\sup_{t\in\mathbb R}
\mu\bigl(
\{x\in\mathbb R^d\colon |\varphi(x)-t|\le r\}
\bigr)
=
\lim_{ r\downarrow0}
\sup_{\nu\in L}\sup_{t\in\mathbb R}
\nu\bigl([t- r,t+ r]\bigr)
=0
\]
as asserted.
\end{proof}

\medskip

In the preceding proof, we required the following lemma concerning the non-atomicity of almost every one-dimensional projection.
\begin{lemma}
\label{lem:generic-nonatomic-projection}
Let $\rho\in\mathcal P_{\rm na}(\mathbb R^d)$. 
Then for surface-almost every $v\in\mathbb S^{d-1}$, it holds that
\[
(\pi_v)_\#\rho\in\mathcal P_{\rm na}(\mathbb R),\qw
\pi_v(x)\defeq x\cdot v.
\]
\end{lemma}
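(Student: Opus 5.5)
The plan is to show that for each $t\in\mathbb R$, the hyperplane $A_{v,t}\defeq\{x\colon x\cdot v=t\}$ can carry positive $\rho$-mass for only a surface-null set of directions $v$. Since $\rho$ is a probability measure, this is a counting argument over the family of hyperplanes, done by induction on dimension.

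For $v\in\mathbb S^{d-1}$, $(\pi_v)_\#\rho$ has an atom at $t$ exactly when $\rho(A_{v,t})>0$. So the bad set is
\[
B\defeq\{v\in\mathbb S^{d-1}\colon \rho(A_{v,t})>0\ \text{for some}\ t\in\mathbb R\}.
\]
It is Borel, or at least universally measurable, because $(v,t)\mapsto\rho(A_{v,t})$ is upper semicontinuous on closed sets. I would prove that $\sigma(B)=0$, where $\sigma$ is surface measure.

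To handle affine subspaces of all dimensions at once, I would use the following standard fact. Among affine subspaces $E$ with $\rho(E)>0$, call $E$ \emph{minimal} if no proper affine subspace of $E$ has positive mass.
\begin{itemize}
\item Two distinct minimal subspaces $E\ne E'$ satisfy $\rho(E\cap E')=0$, since $E\cap E'$ is a proper affine subspace of $E$.
\item Hence, for each $k$, only countably many minimal $k$-dimensional subspaces have mass $\ge 1/n$, for every $n$. So there are countably many minimal subspaces $E_1,E_2,\ldots$ in total.
\item Every affine subspace with positive mass contains a minimal one, because dimensions decrease.
\end{itemize}
Since $\rho$ is non-atomic, points have zero mass, so each $E_j$ has dimension $k_j\ge1$.

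If $\rho(A_{v,t})>0$, then $A_{v,t}$ contains some $E_j$. This forces $v$ to be orthogonal to the direction space $V_j$ of $E_j$, which has dimension $k_j\ge1$. Therefore
\[
B\subset\bigcup_j\bigl(\mathbb S^{d-1}\cap V_j^{\perp}\bigr).
\]
Each set $\mathbb S^{d-1}\cap V_j^\perp$ is a great subsphere of dimension at most $d-2$, so it is $\sigma$-null. A countable union of null sets is null, so $\sigma(B)=0$. The case $d=1$ is trivial: $\pi_{\pm1}$ is $\pm$ the identity, and there $B=\emptyset$ directly.

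I expect the main obstacle to be the countability of minimal subspaces, specifically checking that $E\cap E'$ is a \emph{proper} subspace of $E$ when $E\ne E'$ are minimal. If $E\subseteq E'$ with $E\ne E'$, then $E$ is a proper subspace of $E'$ with positive mass, which contradicts minimality of $E'$. In every other case $E\cap E'\subsetneq E$. With this in hand, the pairwise-null property gives $\sum_j\rho(E_j)\le1$ by inclusion–exclusion on finitely many of them, which yields the countability.

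Measurability of $B$ is a minor point, since $B$ is contained in the null set above. Completeness of the surface measure then settles it.
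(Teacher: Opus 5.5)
Your argument is correct, and it takes a genuinely different route from the paper's.

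\textbf{The paper's route.} The paper never looks at affine subspaces. It detects atoms of $\nu_v=(\pi_v)_\#\rho$ through the diagonal mass $(\nu_v\otimes\nu_v)(\Delta)=\int\nu_v(\{t\})\,\nu_v(dt)$ and integrates this over $v\in\mathbb S^{d-1}$. Tonelli then swaps the integrals, so the inner quantity becomes $\sigma(\{v\colon v\cdot(x-y)=0\})$. This vanishes whenever $x\neq y$, and the diagonal $\{x=y\}$ is $\rho\otimes\rho$-null because $\rho$ has no atoms. Hence the integral is zero and the integrand vanishes for $\sigma$-a.e.\ $v$.

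\textbf{Your route.} You prove a structural lemma instead. Distinct minimal positive-mass affine subspaces meet in $\rho$-null sets, so at most $n$ of them have mass $\ge 1/n$, and every positive-mass affine subspace contains one. You then observe that $A_{v,t}\supseteq a+V_j$ forces $v\in V_j^\perp$. The steps you flagged are handled correctly: $E\cap E'\subsetneq E$ for distinct minimal $E,E'$, and finite additivity up to null pairwise intersections. Non-atomicity of $\rho$ enters exactly once, to guarantee $\dim V_j\ge 1$.

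\textbf{Comparison.} The paper's approach buys brevity: a two-line Fubini computation with no combinatorics. Yours buys a stronger, explicit conclusion. The exceptional directions lie in a countable union of great subspheres of dimension at most $d-2$. That set is not merely $\sigma$-null but also meagre and of Hausdorff dimension at most $d-2$. Any direction avoiding these countably many explicit proper subspaces works. Measurability of the bad set also becomes irrelevant, since it sits inside a Borel null set.
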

\begin{proof}
The case $d=1$ is immediate because $\mathbb S^0=\{-1,1\}$ and $\pi_v(x)=\pm x$.
We therefore assume that $d\ge2$.
Let $\sigma$ denote the surface measure on $\mathbb S^{d-1}$. 
For $v\in\mathbb S^{d-1}$, set
\[
\nu_v\defeq (\pi_v)_\#\rho.
\]
Let
\[
\Delta\defeq \{(s,s)\colon s\in\mathbb R\}\subset\mathbb R^2.
\]
By the definition of pushforward measures,
\[
(\nu_v\otimes\nu_v)(\Delta)
=
(\rho\otimes\rho)
\bigl(
\{(x,y)\in\mathbb R^d\times\mathbb R^d\colon 
v\cdot(x-y)=0\}
\bigr).
\]
Integrating with respect to $v$ and applying Tonelli's theorem gives
\begin{align*}
\int_{\mathbb S^{d-1}}
(\nu_v\otimes\nu_v)(\Delta)\,\sigma(dv)
&=
\int_{\mathbb R^d\times\mathbb R^d}
\sigma\bigl(
\{v\in\mathbb S^{d-1}\colon v\cdot(x-y)=0\}
\bigr)
\,(\rho\otimes\rho)(dx,dy).
\end{align*}
If $x\neq y$, then
\[
\{v\in\mathbb S^{d-1}\colon v\cdot(x-y)=0\}
\]
is a $(d-2)$-dimensional great subsphere of $\mathbb S^{d-1}$ and therefore has zero surface measure. 
If $x=y$, the above set is the
whole sphere. Hence
\[
\int_{\mathbb S^{d-1}}
(\nu_v\otimes\nu_v)(\Delta)\,\sigma(dv)
=
\sigma(\mathbb S^{d-1})\,
(\rho\otimes\rho)
\bigl(
\{(x,x)\colon x\in\mathbb R^d\}
\bigr).
\]
Since $\rho$ is non-atomic,
\[
(\rho\otimes\rho)
\bigl(
\{(x,x)\colon x\in\mathbb R^d\}
\bigr)
=
\int_{\mathbb R^d}\rho(\{x\})\,\rho(dx)
=0.
\]
Therefore,
\[
\int_{\mathbb S^{d-1}}
(\nu_v\otimes\nu_v)(\Delta)\,\sigma(dv)=0.
\]
Since the integrand is nonnegative,
\[
(\nu_v\otimes\nu_v)(\Delta)=0
\]
for $\sigma$ almost every $v\in\mathbb S^{d-1}$.
Last, for any Borel probability measure $\nu$ on $\mathbb R$, it holds that
\[
(\nu\otimes\nu)(\Delta)
=
\int_{\mathbb R}\nu(\{t\})\,\nu(dt).
\]
It follows that $(\nu\otimes\nu)(\Delta)=0$ if and only if $\nu$ is non-atomic because if $\nu$ has an atom $t_0$ of mass $w>0$, then 
\begin{align*}
    (\nu\otimes\nu)(\Delta)\geq (\nu\otimes\nu)\bigl(\{(t_0,t_0)\}\bigr)=(\nu\otimes\nu)\bigl(\{t_0\}\times\{t_0\}\bigr)=\nu(\{t_0\})^2=w^2>0.
\end{align*}
Consequently,
\[
\nu_v=(\pi_v)_\#\rho\in\mathcal P_{\rm na}(\mathbb R)
\]
for $\sigma$ almost every $v\in\mathbb S^{d-1}$.
\end{proof}

\subsection{Proof of Example~\ref{ex:curve-measure-uniform}}
\label{app:curve-measure-uniform}
\begin{proof}
Define
\[
E\defeq 
\Bigl\{
\bigl(\gamma,\gamma(s)\bigr)\colon
\gamma\in K',\ s\in[0,1]
\Bigr\}
\subset K'\times Q.
\]
Consider the map
\[
\Psi\colon K'\times[0,1]\to E,
\qquad
\Psi(\gamma,s)\defeq \bigl(\gamma, \gamma(s)\bigr).
\]
The map $\Psi$ is continuous. Indeed, convergence in $\mathcal{C}^1([0,1],Q)$ implies uniform convergence, so the evaluation map $(\gamma,s)\mapsto\gamma(s)$ is continuous.
Moreover, $\Psi$ is injective. In fact, if
\[
\bigl(\gamma, \gamma(s)\bigr)=\bigl(\widetilde\gamma, \widetilde\gamma(t)\bigr),
\]
then $\gamma=\widetilde\gamma$, and hence
$\gamma(s)=\gamma(t)$. Since $\gamma$ is injective, we obtain $s=t$.
Since $K'\times[0,1]$ is compact and $ K'\times Q$ is Hausdorff,
$\Psi$ is a homeomorphism from $K'\times[0,1]$ onto $E$.
In particular, $E$ is compact.
Define
\begin{align*}
f\colon E\to[0,1], \qw f\bigl(\gamma,\gamma(s)\bigr)\defeq s.
\end{align*}
Equivalently,
\[
f=\pi_2\circ\Psi^{-1},
\]
where $\pi_2(\gamma,s)=s$. Hence $f$ is continuous.

Since $K'\times Q$ is a compact metric space, it is normal.
By the Tietze extension theorem, there exists
\[
\widetilde\varphi\in \mathcal{C}(K'\times Q ,[0,1])
\]
such that
\[
\widetilde\varphi|_E=f.
\]
Therefore,
\[
\widetilde\varphi\bigl(\gamma,\gamma(s)\bigr)=s
\qfa \gamma\in K'\qa s\in[0,1].
\]

Since $K'$ is compact in $\mathcal{C}^1([0,1],Q)$, there exists $C<\infty$ such that
\[
|\dot\gamma(s)|\le C
\qquad
\qfa \gamma\in K'\qa s\in[0,1].
\]
On the other hand, by assumption,
\[
\int_0^1|\dot\gamma(s)|\,ds\ge c
\qfa \gamma\in K'.
\]
Fix $\gamma\in K'$ and $t\in\mathbb R$.
By the definition of $\sigma_\gamma$ and the identity $\widetilde\varphi(\gamma,\gamma(s))=s$, we have
\begin{align*}
&\sigma_\gamma
\bigl
\{
x\in Q\colon
|\widetilde\varphi(\gamma,x)-t|\le r
\}
\bigr)
=
\frac{
\displaystyle
\int_0^1
\mathbf 1_{\{|s-t|\le r\}}
|\dot\gamma(s)|\,ds
}{
\displaystyle
\int_0^1|\dot\gamma(s)|\,ds
}.
\end{align*}
Hence
\[
\sigma_\gamma
\bigl(
\{
x\in Q\colon
|\widetilde\varphi(\gamma,x)-t|\le r
\}
\bigr)
\le
\frac{C}{c}
\mathcal L^1
\bigl([0,1]\cap[t- r,t+ r]\bigr)
\le
\frac{2C}{c} r.
\]
The estimate is uniform in $\gamma\in K'$ and $t\in\mathbb R$.
Therefore,
\[
\lim_{ r\downarrow0}
\sup_{\gamma\in K'}
\sup_{t\in\mathbb R}
\sigma_\gamma
\bigl(
\{
x\in Q\colon
|\widetilde\varphi(\gamma,x)-t|\le r
\}
\bigr)
=0.
\]
It remains to relate this parametrized statement to the uniform level set condition on a family of measures.
First, the map
\[
K'\ni\gamma\mapsto\sigma_\gamma\in\mathcal P_p(Q)
\]
is continuous. 
Indeed, if $\gamma_n\to\gamma$ in $\mathcal{C}^1$, then
\[
\gamma_n\to\gamma
\quad\text{and}\quad
|\dot\gamma_n|\to|\dot\gamma|
\]
uniformly on $[0,1]$, and hence, for every
$h\in \mathcal{C}(Q)$, it holds that
\[
\int_Q h\,d\sigma_{\gamma_n}
=
\frac{
\displaystyle
\int_0^1
h(\gamma_n(s))|\dot\gamma_n(s)|\,ds
}{
\displaystyle
\int_0^1|\dot\gamma_n(s)|\,ds
}
\to
\frac{
\displaystyle
\int_0^1
h(\gamma(s))|\dot\gamma(s)|\,ds
}{
\displaystyle
\int_0^1|\dot\gamma(s)|\,ds
}
=
\int_Q h\,d\sigma_\gamma.
\]
Thus, $\sigma_{\gamma_n}\to\sigma_\gamma$ weakly.
Since $Q$ is compact, this is equivalent to convergence in
$\sfW_p$.
Consequently, 
\[
K\defeq \{\sigma_\gamma\colon \gamma\in K'\}
\]
is compact in $\mathcal P_p(Q)$. 

Assume now that distinct curves in $K'$ have distinct images, i.e., 
\[
\gamma_1([0,1])=\gamma_2([0,1])
\quad\text{implies}\quad
\gamma_1=\gamma_2.
\]
Then, $\gamma\mapsto\sigma_\gamma$ is injective.
Since it is a continuous bijection from the compact space $K'$
onto the Hausdorff space $K$, its inverse
\[
K\ni\sigma_\gamma\mapsto\gamma\in K'
\]
is continuous. We may therefore define
\[
\varphi\colon K\times Q\to[0,1],
\qw
\varphi(\sigma_\gamma, x)
\defeq 
\widetilde\varphi(\gamma, x).
\]
This map is continuous, and the preceding estimate gives
\[
\sup_{\mu\in K}\sup_{t\in\mathbb R}
\mu
\bigl(
\{
x\in Q\colon
|\varphi(\mu,x)-t|\le r
\}
\bigr)
\le
\frac{2C}{c} r.
\]
Hence, $K$ satisfies the uniform level set condition \eqref{eq:uniform-level-set-condition}.
\end{proof}

\subsection{Proof of Theorem~\ref{thm:main-multi}}
\label{app:proof-main-theorem}
\begin{proof}
The proof proceeds in six steps.

\medskip
\noindent\textbf{Step 1 (Truncate the output space).}
For $R>0$, define the truncation map
\[
\Pi_{R}\colon\mathbb R^{d'}\to\overline{B_R(0)}\subset \mathbb R^{d'}\quad\text{by}\quad
\Pi_{R}(y)
\defeq 
\begin{cases}
y, & |y|\le R,\\[1mm]
R\dfrac{y}{|y|}, & |y|>R,
\end{cases}
\]
and set
\[
F_R(\mu)\defeq (\Pi_{R})_\#F(\mu) \qf \mu\in K.
\]
Then a direct calculation shows that
\[
\sfW_p^p\bigl(F(\mu),F_R(\mu)\bigr)
\le
\int_{\{|y|>R\}}|y|^p\,F(\mu)(dy).
\]
Hence
\[
\sup_{\mu\in K}
\sfW_p\bigl(F(\mu),F_R(\mu)\bigr)
\le
E_{{\rm tail},p}(R),
\]
where
\[
E_{{\rm tail},p}(R)
\defeq 
\left(\sup_{\nu\in F(K)}
\int_{\{|y|>R\}}|y|^p\,\nu(dy)\right)^{1/p}.
\]
Since $F(K)$ is compact in $(\mathcal P_p(\mathbb R^{d'}), \sfW_p)$ by the continuity of $F$, Lemma~\ref{lem:uniform-truncation} shows that
\[
E_{{\rm tail},p}(R)\to0
\qas R\to\infty.
\]

\medskip
\noindent\textbf{Step 2 (Cover the truncated output space).}
Fix $h>0$.
Since $\overline{B_R(0)}\subset\mathbb R^{d'}$ is compact, we can choose an $h$-net
\[
\{y_1,\ldots,y_N\}\subset\overline{B_R(0)}
\]
and a continuous partition of unity $\{\psi_i\}_{i=1}^N\subset \mathcal{C}(\mathbb R^{d'})$ such that
\[
0\le\psi_i\le1,
\qquad
\sum_{i=1}^N\psi_i\equiv1
\qon\overline{B_R(0)},
\]
and
\[
\operatorname{supp}(\psi_i)\cap\overline{B_R(0)}
\subset B_h(y_i)
\]
for each $i$. Moreover, the covering may be chosen so that
\[
N=N(R,h)
\le C_{d'}\left(1+\frac{R}{h}\right)^{d'}
\]
by a standard volumetric bound.
Next, define
\[
\alpha_i(\mu)
\defeq
\int_{\mathbb R^{d'}}\psi_i(y)\,F_R(\mu)(dy)\qa
F_{R,h}(\mu)
\defeq
\sum_{i=1}^N\alpha_i(\mu)\,\delta_{y_i}.
\]
Since $F_R$ is continuous by the $1$-Lipschitz property of $\Pi_R$ and each $\psi_i$ is bounded and continuous,
$\alpha_i\colon K\to[0,1]$ is continuous.
For each $\mu\in K$, define a probability measure $\pi_\mu$ on
$\mathbb R^{d'}\times\mathbb R^{d'}$ by
\[
\pi_\mu
\defeq
\sum_{i=1}^N
\bigl(\mathrm{id},y_i\bigr)_\#
\bigl(\psi_i F_R(\mu)\bigr)
=
\sum_{i=1}^N
\bigl(\psi_i F_R(\mu)\bigr)\otimes\delta_{y_i},
\]
where $\psi_i F_R(\mu)$ denotes the (sub-probability) measure with density $\psi_i$ with respect to $F_R(\mu)$.
Because $F_R(\mu)$ is supported on $\overline{B_R(0)}$ and $\sum_{i=1}^N\psi_i=1$ on $\overline{B_R(0)}$, it holds that $\pi_\mu$ is a coupling of $F_R(\mu)$ and $F_{R,h}(\mu)$.
Moreover, for each $i$, the support condition implies that if $\psi_i(y)>0$, then $|y-y_i|<h$ for $F_R(\mu)$-almost every $y$.
Hence
\begin{align*}
\sfW_p^p\bigl(F_R(\mu),F_{R,h}(\mu)\bigr)
&\le
\int_{\mathbb R^{d'}\times\mathbb R^{d'}}
|y-z|^p\,\pi_\mu(dy,dz)\\
&=
\sum_{i=1}^N
\int_{\mathbb R^{d'}}
|y-y_i|^p\psi_i(y)\,F_R(\mu)(dy)\\
&\le
h^p\sum_{i=1}^N
\int_{\mathbb R^{d'}}
\psi_i(y)\,F_R(\mu)(dy)
=h^p.
\end{align*}
Therefore,
\[
\sup_{\mu\in K}
\sfW_p\bigl(F_R(\mu),F_{R,h}(\mu)\bigr)
\le h\qfa R>0.
\]

\medskip
\noindent\textbf{Step 3 (Transport to and from the uniform distribution).}
Since $K\subset\mathcal{P}_p(\R^d)$ satisfies the uniform level set condition \eqref{eq:uniform-level-set-condition} from Assumption~\ref{ass:uniform-level-set-condition} by hypothesis, the construction in Lemma~\ref{lem:Uniform-approximate-uniformizer} delivers the existence of $H_0\in \mathcal{C}(K\times \mathbb{R}^d,[0,1]) $ such that
\begin{equation}
\label{eq:quantitative-uniformizer-main-proof-1}
H_0(\mu,\slot)_\#\mu=
\mathrm{Unif}[0,1]\qfa\mu\in K.
\end{equation}
Next, define the cumulative weights
\[
s_0(\mu)\defeq 0,
\qa
s_i(\mu)\defeq \sum_{\ell=1}^i\alpha_\ell(\mu)
\qfa i=1,\ldots,N.
\]
Then $s_N(\mu)=1$.
Also define the discontinuous map
\[
(\mu,u)\mapsto Q_0(\mu,u)
\defeq 
\sum_{i=1}^N
\mathbf 1_{(s_{i-1}(\mu),s_i(\mu)]}(u) \, y_i.
\]
If $U\sim\mathrm{Unif}[0,1]$, then a direct calculation shows that
\[
\operatorname{Law}\bigl(Q_0(\mu,U)\bigr)=Q_0(\mu,\slot)\push \mathrm{Unif}[0,1]
=
F_{R,h}(\mu).
\]
In particular, \eqref{eq:quantitative-uniformizer-main-proof-1} shows that
\begin{align*}
    F_{R,h}(\mu) = Q_0\bigl(\mu,H_0(\mu,\slot)\bigr)\push\mu.
\end{align*}

\medskip

\noindent\textbf{Step 4 (Regularize the discontinuous pushforward map).}
Let $\delta\in(0,1)$. Let $\kappa\in \mathcal{C}_c^\infty((-1,1))$ be nonnegative with $\int_{\mathbb R}\kappa(r)\,dr=1$. Set
\[
\kappa_\delta(r)\defeq \delta^{-1}\kappa(r/\delta).
\]
Extend $Q_0(\mu,\slot)$ to $\mathbb R$ by setting
\[
\overline Q_0(\mu,u)
\defeq 
\begin{cases}
y_1, & u\leq 0,\\
Q_0(\mu, u), & 0<u\leq1,\\
y_N, & u>1.
\end{cases}
\]
Define
\[
Q_\delta(\mu,u)
\defeq 
\int_{\mathbb R}
\kappa_\delta(u-v)\overline Q_0(\mu, v)\,dv.
\]
Equivalently,
\[
Q_\delta(\mu,u)
=
\sum_{i=1}^N w_i^\delta(\mu, u)\, y_i,
\]
where the nonnegative continuous weights $w_i^\delta$ are obtained by integrating $\kappa_\delta(u-\slot)$ over the intervals associated with $y_i$. 
In particular,
\[
\sum_{i=1}^Nw_i^\delta(\mu,u) = 1
\]
for all $\mu\in K$ and $u\in[0,1]$.

Last, define our final approximation $G$ to be
\[
(\mu,x)\mapsto G(\mu,x)\defeq G_{R,\delta,h}(\mu,x)
\defeq 
Q_\delta\bigl(\mu,H_0(\mu,x)\bigr).
\]
Since $H_0$ and $Q_\delta$ are continuous, it holds that
\[
G_{R,\delta,h}
\in
\mathcal{C}(K\times \mathbb{R}^d,\mathbb R^{d'})
\]
is also continuous.

\medskip

\noindent\textbf{Step 5 (Bound the regularized pushforward error).}
We next compare $Q_\delta$ with $Q_0$. Let
\[
T_\delta(\mu)
\defeq
\bigcup_{i=1}^{N-1}
\{u\in[0,1]\colon |u-s_i(\mu)|<\delta\}
\]
for every $\mu\in K$. Then
\[
|T_\delta(\mu)|\le 2N(R,h)\delta,
\]
and $Q_\delta(\mu,\slot)=Q_0(\mu,\slot)$ outside $T_\delta(\mu)$, apart from a set of Lebesgue measure zero.
Since both maps take values in $\overline{B_R(0)}$, it holds that
\[
|Q_\delta(\mu,u)-Q_0(\mu,u)|\le 2R
\]
for each $\mu$ and $u$. Coupling them by the same random variable
$U\sim\mathrm{Unif}[0,1]$ gives
\begin{align*}
\sfW_p^p\bigl(
G_{R,\delta,h}(\mu,\slot)\push\mu,
F_{R,h}(\mu)
\bigr)&=
\sfW_p^p\bigl(
Q_\delta(\mu,\slot)_\#\mathrm{Unif}[0,1],
Q_0(\mu,\slot)\push \mathrm{Unif}[0,1]
\bigr)\\
&=
\sfW_p^p\bigl(
\operatorname{Law}(Q_\delta(\mu,U)),
\operatorname{Law}(Q_0(\mu,U))
\bigr)
\\
&\le
\mathbb E|Q_\delta(\mu, U)-Q_0(\mu, U)|^p
\\
&\le
(2R)^p|T_\delta(\mu)|
\\
&\le
(2R)^p\,2N(R,h)\delta.
\end{align*}
Therefore,
\begin{equation*}
\label{eq:error-smoothing-main}
\sup_{\mu\in K}
\sfW_p\bigl(
G_{R,\delta,h}(\mu,\slot)\push\mu,
F_{R,h}(\mu)
\bigr)
\le
2R\bigl(2N(R,h)\delta\bigr)^{1/p}.
\end{equation*}

\medskip
\noindent\textbf{Step 6 (Combine the estimates).}
Combining Steps 1--5 and using the triangle inequality, we obtain
\begin{align}\label{eq:quantitative-main-bound}
\sup_{\mu\in K}
\sfW_p\bigl(
G_{R,\delta,h}(\mu,\slot)_\#\mu,
F(\mu)
\bigr)
\le
2R\bigl(2N(R,h)\delta\bigr)^{1/p}
+h
+E_{{\rm tail},p}(R).
\end{align}
Choose $R>0$ sufficiently large such that
\[
E_{{\rm tail},p}(R)<\frac{\varepsilon}{3}.
\]
Next, choose $h>0$ sufficiently small such that
\[
h<\frac{\varepsilon}{3}.
\]
With $R$ and $h$ fixed as in the preceding displays, choose $\delta\in(0,1)$ sufficiently small such that
\[
2R\bigl(2N(R,h)\delta\bigr)^{1/p}
<\frac{\varepsilon}{3}.
\]
Then \eqref{eq:quantitative-main-bound} yields
\[
\sup_{\mu\in K}
\sfW_p\bigl(
G_{R,\delta,h}(\mu,\slot)_\#\mu,
F(\mu)
\bigr)
<\varepsilon.
\]
This completes the proof.
\end{proof}

\medskip

The following result is a well-known property of compact sets in Wasserstein space that is used in the preceding argument; we provide a proof for the sake of completeness.
\begin{lemma}[Uniform integrability of $p$-th moments]
\label{lem:uniform-truncation}
Let $d\in\mathbb N$ and $1\le p<\infty$. Let $K\subset\mathcal P_p(\mathbb R^d)$ be $\sfW_p$-compact.
Then
\begin{equation}
\label{eq:uniform-tail}
\lim_{R\to\infty}
\sup_{\nu\in K}
\int_{\{|y|>R\}}|y|^p\,\nu(dy)
=0.
\end{equation}
\end{lemma}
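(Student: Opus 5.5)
The plan is to argue by contradiction, combining relative compactness in $\sfW_p$ with the fact that each single measure in $\mathcal P_p$ has integrable $p$-th moment tails. Suppose \eqref{eq:uniform-tail} fails. Then there exist $\varepsilon>0$, radii $R_n\to\infty$, and measures $\nu_n\in K$ such that
\[
\int_{\{|y|>R_n\}}|y|^p\,\nu_n(dy)\ge\varepsilon \qquad\text{for all } n.
\]
By $\sfW_p$-compactness of $K$ we may pass to a subsequence and assume $\nu_n\to\nu_*\in K$ in $\sfW_p$. The goal is to show that the tail mass of $\nu_n$ is eventually controlled by that of $\nu_*$, which is small because $\int|y|^p\,\nu_*(dy)<\infty$.

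To transfer tails, I would use the standard characterization that $\sfW_p$-convergence is equivalent to weak convergence together with convergence of $p$-th moments. This is used directly rather than reproved. It suffices to fix a continuous cutoff $\chi_R\colon\R^d\to[0,1]$ with $\chi_R=1$ on $\{|y|\le R/2\}$ and $\chi_R=0$ on $\{|y|\ge R\}$, and to note that $y\mapsto|y|^p\chi_R(y)$ is bounded and continuous. Then
\[
\int_{\{|y|>R\}}|y|^p\,\nu_n(dy)\le \int|y|^p\,\nu_n(dy)-\int|y|^p\chi_R(y)\,\nu_n(dy).
\]
As $n\to\infty$ the right-hand side converges to $\int|y|^p(1-\chi_R)\,d\nu_*$, which is at most $\int_{\{|y|>R/2\}}|y|^p\,d\nu_*$. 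By dominated convergence this last quantity is below $\varepsilon/2$ once $R$ is large.

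To close the contradiction, fix such an $R$. For all large $n$ we have $R_n\ge R$, so the tail integral over $\{|y|>R_n\}$ is at most the one over $\{|y|>R\}$, and the latter is below $\varepsilon$ for $n$ large. This contradicts the choice of $\nu_n$.

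The only genuinely nontrivial ingredient is the moment convergence under $\sfW_p$. If I prefer to avoid citing it, I would derive it from the triangle inequality: $\bigl|\sfW_p(\nu_n,\delta_0)-\sfW_p(\nu_*,\delta_0)\bigr|\le\sfW_p(\nu_n,\nu_*)$, and $\sfW_p(\nu,\delta_0)^p=\int|y|^p\,d\nu$. Weak convergence follows because $\sfW_p\ge\sfW_1$ controls bounded Lipschitz test functions, and this suffices for $|y|^p\chi_R$ after a standard approximation. I expect this moment-convergence step to be the main point, though it is routine.
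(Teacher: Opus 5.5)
Your proof is correct. Its skeleton matches the paper's: argue by contradiction, extract a $\sfW_p$-convergent subsequence $\nu_n\to\nu_*$, use a continuous cutoff, and use $\{|y|>R_n\}\subset\{|y|>R\}$ for large $n$. The difference lies in which convergence fact carries the argument.

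The paper uses the \emph{outer} cutoff $\varphi_M(y)=|y|^p\chi_M(|y|)$, which vanishes for $|y|\le M$ and equals $|y|^p$ for $|y|\ge M+1$. It then simply invokes that $\sfW_p$-convergence gives $\int\varphi\,d\nu_n\to\int\varphi\,d\nu$ for continuous $\varphi$ of at most $p$-th order growth.

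You use the complementary \emph{inner} cutoff, so $|y|^p\chi_R$ is bounded and continuous. You then write $\int|y|^p(1-\chi_R)\,d\nu_n$ as the full $p$-th moment minus a bounded test integral. This needs only weak convergence plus convergence of $\int|y|^p\,d\nu_n$, and you get the latter in one line from $\bigl|\sfW_p(\nu_n,\delta_0)-\sfW_p(\nu_*,\delta_0)\bigr|\le\sfW_p(\nu_n,\nu_*)$.

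The paper's version is shorter. However, the characterization it cites (convergence against $p$-growth test functions) is standardly proved through exactly this kind of uniform-integrability estimate along a convergent sequence. Your route is self-contained, using only the triangle inequality and $\sfW_1\le\sfW_p$ against bounded Lipschitz functions.

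If you take $\chi_R$ Lipschitz, then $|y|^p\chi_R$ is itself bounded and Lipschitz. The ``standard approximation'' you mention is then unnecessary.
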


\begin{proof}
Suppose by contradiction that \eqref{eq:uniform-tail} fails.
Then there exist $\delta>0$, a sequence $(\nu_n)_{n\in\mathbb N}\subset K$, and a sequence $R_n\to\infty$ such that
\[
\int_{\{|y|>R_n\}}|y|^p\,\nu_n(dy)\ge\delta
\qfa n\in\mathbb N.
\]
Since $K$ is compact with respect to $\sfW_p$, after passing to a subsequence, we may assume that
\[
\nu_n\to\nu
\qin \sfW_p
\]
for some $\nu\in K$.

Fix $\eta>0$.
Since $\nu\in\mathcal P_p(\mathbb R^d)$, there exists $M>0$ such that
\[
\int_{\{|y|>M\}}|y|^p\,\nu(dy)<\eta.
\]
Choose a continuous function $\chi_M\colon [0,\infty)\to[0,1]$ such that
\[
\chi_M(r)=0 \qfa r\le M\qa
\chi_M(r)=1 \qfa r\ge M+1,
\]
and define
\[
\varphi_M(y)\defeq |y|^p\chi_M(|y|) \qfa  y\in\mathbb R^d.
\]
Then $\varphi_M$ is continuous and satisfies
\[
0\le\varphi_M(y)\le |y|^p,
\]
as well as
\[
\varphi_M(y)=0 \qif |y|\le M\qa
\varphi_M(y)=|y|^p\qif |y|\ge M+1.
\]

Since $\nu_n\to\nu$ in $\sfW_p$ and $\varphi_M$ is continuous with at most $p$-th-order growth,
\[
\int_{\mathbb R^d}\varphi_M(y)\,\nu_n(dy)
\to
\int_{\mathbb R^d}\varphi_M(y)\,\nu(dy)\qas n\to\infty.
\]
Moreover,
\[
\int_{\mathbb R^d}\varphi_M(y)\,\nu(dy)
\le
\int_{\{|y|>M\}}|y|^p\,\nu(dy)
<\eta.
\]
Hence, for all sufficiently large $n$, it holds that
\[
\int_{\mathbb R^d}\varphi_M(y)\,\nu_n(dy)<2\eta.
\]
Since $R_n\to\infty$, we also have $R_n\ge M+1$ for all sufficiently large $n$. 
Therefore,
\begin{align*}
\int_{\{|y|>R_n\}}|y|^p\,\nu_n(dy)
&\le
\int_{\{|y|>M+1\}}|y|^p\,\nu_n(dy)\\
&\le
\int_{\mathbb R^d}\varphi_M(y)\,\nu_n(dy)\\
&<2\eta.
\end{align*}
Choosing $\eta<\delta/2$ gives a contradiction.
\end{proof}

\medskip

The core of the proof of Theorem~\ref{thm:main-multi} relies on transporting to the one-dimensional uniform distribution with a jointly continuous in-context map.
\begin{lemma}[Continuous exact uniformizer]
\label{lem:Uniform-approximate-uniformizer}
Let $p\in[1,\infty)$. Let $K\subset\mathcal{P}_p(\mathbb{R}^d)$ satisfy Assumption~\ref{ass:uniform-level-set-condition}. Then there exists
\[
H_0\in \mathcal{C}(K\times\mathbb{R}^d,[0,1])
\]
such that
\[
H_0(\mu,\slot)_\#\mu=
\mathrm{Unif}[0,1]\qfa \mu\in K.
\]
\end{lemma}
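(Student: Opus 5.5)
The plan is to take $H_0(\mu,x)\defeq \Phi_\mu\bigl(\varphi(\mu,x)\bigr)$, where $\varphi$ is the scalarization from Assumption~\ref{ass:uniform-level-set-condition} and $\Phi_\mu(t)\defeq \nu_\mu\bigl((-\infty,t]\bigr)$ is the cumulative distribution function of the scalar pushforward $\nu_\mu\defeq\varphi(\mu,\slot)\push\mu\in\mathcal P(\R)$. This is the probability integral transform. The argument has three steps: (i) each $\nu_\mu$ is non-atomic, so $(\Phi_\mu)\push\nu_\mu=\mathrm{Unif}[0,1]$; (ii) the family $\{\Phi_\mu\}_{\mu\in K}$ is uniformly equicontinuous in $t$; (iii) $\mu\mapsto\Phi_\mu$ is continuous in the sup norm. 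Together these give joint continuity of $H_0$.

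For (i), the argument of Remark~\ref{rem-unif-level-set} applied to the set $\{\varphi(\mu,\slot)=t\}$ gives $\nu_\mu(\{t\})\le \sup_{\nu\in K}\sup_{s}\nu(\{|\varphi(\nu,\slot)-s|\le r\})\to0$, so $\nu_\mu$ is atomless. Its distribution function $\Phi_\mu$ is therefore continuous. The standard fact that $F(X)\sim\mathrm{Unif}[0,1]$ when $X$ has a continuous distribution function $F$ then gives $H_0(\mu,\slot)\push\mu=(\Phi_\mu)\push\nu_\mu=\mathrm{Unif}[0,1]$. The values of $H_0$ lie in $[0,1]$.

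For (ii), the modulus
\[
\omega(r)\defeq\sup_{\mu\in K}\sup_{t\in\R}\mu\bigl(\{x\colon|\varphi(\mu,x)-t|\le r\}\bigr)
\]
tends to $0$ by \eqref{eq:uniform-level-set-condition}. For $s<t$ with $|t-s|\le 2r$ we have $0\le\Phi_\mu(t)-\Phi_\mu(s)\le\nu_\mu([s,t])\le\omega(r)$, uniformly in $\mu$.

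For (iii), let $\mu_n\to\mu$ in $K$. The argument in Remark~\ref{rem:uniform-level-set-condition-equiv} shows that $\nu_{\mu_n}\to\nu_\mu$ weakly: split the integral into the part on a large ball, where $\varphi$ is uniformly continuous on compacts, and a tail controlled by the tightness of $K$. Since $\nu_\mu$ is atomless, weak convergence gives $\Phi_{\mu_n}(t)\to\Phi_\mu(t)$ for every $t$. Pólya's theorem, which uses monotonicity and continuity of the limit distribution function, upgrades this to $\sup_t|\Phi_{\mu_n}(t)-\Phi_\mu(t)|\to0$. Joint continuity at $(\mu,x)$ then follows from
\[
|H_0(\mu_n,x_n)-H_0(\mu,x)|\le\sup_t|\Phi_{\mu_n}-\Phi_\mu|+|\Phi_\mu(\varphi(\mu_n,x_n))-\Phi_\mu(\varphi(\mu,x))|.
\]
The second term vanishes by continuity of $\varphi$ and of $\Phi_\mu$. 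Sequential continuity suffices because $K\times\R^d$ is metrizable.

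The main obstacle is step (iii): the weak continuity of $\mu\mapsto\nu_\mu$ when $\varphi$ itself depends on $\mu$, combined with the upgrade to uniform convergence of distribution functions. Both ingredients are already available, from Remark~\ref{rem:uniform-level-set-condition-equiv} and from atomlessness via Pólya's theorem. One could alternatively bypass Pólya by using the uniform modulus $\omega$ from (ii) together with a finite grid in $t$ and the Portmanteau theorem.
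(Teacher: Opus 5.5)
Your proposal is correct. It builds the same uniformizer as the paper, $H_0(\mu,x)=F_\mu(\varphi(\mu,x))$ with $F_\mu$ the distribution function of $\varphi(\mu,\slot)\push\mu$. The atomlessness and probability-integral-transform steps also coincide; the two routes differ only in how joint continuity is established.

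The paper introduces the smoothed distribution functions $H_r(\mu,x)=\int\Theta\bigl((\varphi(\mu,x)-\varphi(\mu,z))/r\bigr)\,\mu(dz)$. It checks that each $H_r$ is jointly continuous via the tightness argument of Remark~\ref{rem:uniform-level-set-condition-equiv}. It then shows $\sup_{K\times\R^d}|H_r-H_0|\le\omega_K(r)$, where $\omega_K(r)$ is the quantity inside the limit in \eqref{eq:uniform-level-set-condition}, so $H_0$ is a uniform limit of continuous maps.

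You instead apply that same tightness argument to get weak continuity of $\mu\mapsto\varphi(\mu,\slot)\push\mu$. You then upgrade pointwise convergence of distribution functions to uniform convergence with P\'olya's theorem.

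The paper's route uses the uniform modulus directly and avoids P\'olya. It also produces explicit continuous approximants of the uniformizer with a quantitative error bound, and these approximants are themselves integral-kernel (in-context) maps.

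Your route is shorter and shows that continuity needs only pointwise atomlessness of each $\varphi(\mu,\slot)\push\mu$. Neither the uniformity in \eqref{eq:uniform-level-set-condition} nor compactness of $K$ is used. In fact your step (ii) never enters the main argument, only the grid-based alternative. The equivalence in Remark~\ref{rem:uniform-level-set-condition-equiv} is therefore unnecessary for this lemma.
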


\begin{proof}
Let $\varphi$ be the continuous scalarization from Assumption~\ref{ass:uniform-level-set-condition}. Fix $ r>0$. Define
\begin{align}\label{eqn:level_set_residual}
\omega_K( r)
\defeq 
\sup_{\mu\in K}\sup_{t\in\mathbb R}
\mu\bigl(
\{x\in\mathbb{R}^d\colon
|\varphi(\mu,x)-t|\le r\}
\bigr).
\end{align}
Let $\Theta\colon \mathbb R\to[0,1]$ be continuous and nondecreasing such that
\[
\Theta(s)=0
\qfa s\le-1\qa
\Theta(s)=1
\qfa s\ge1.
\]
Define
\[
H_ r(\mu,x)
\defeq 
\int_{\mathbb{R}^d}
\Theta\left(
\frac{
\varphi(\mu,x)-\varphi(\mu,z)
}{r}
\right)
\,\mu(dz).
\]
Clearly $H_ r$ takes values in $[0,1]$.

We first show that
\[
H_ r\in \mathcal{C}(K\times \mathbb{R}^d, [0,1]).
\]
To this end, let $(\mu_n,x_n)\to(\mu,x)$ in $K\times \mathbb{R}^d$ as $n\to\infty$ and set
\[
g_n(z)
\defeq 
\Theta\left(
\frac{
\varphi(\mu_n,x_n)-\varphi(\mu_n,z)
}{r}
\right)\qa
g(z)
\defeq 
\Theta\left(
\frac{
\varphi(\mu,x)-\varphi(\mu,z)
}{r}
\right).
\]
Since $\varphi$ is continuous, $g_n\to g$ uniformly on every compact subset of $\mathbb{R}^d$. 
Moreover,
\[
0\le g_n\le1\qa 0\le g\le1
\]
and each are continuous. Since $\mu_n\to\mu$ in $\sfW_p$, in particular $\mu_n$ converges to $\mu$ in distribution, and the family $\{\mu_n\colon n\in\mathbb N\}\cup\{\mu\}$ is uniformly tight.
It follows that
\[
\int_{\mathbb{R}^d} 
g_n(z)\,\mu_n(dz)
\to
\int_{\mathbb{R}^d} g(z)\,\mu(dz)
\]
as $n\to\infty$ by the same argument used in Remark~\ref{rem:uniform-level-set-condition-equiv}. Hence
\[
H_ r(\mu_n,x_n)\to H_ r(\mu,x)\qas n\to\infty,
\]
so $H_ r$ is continuous.

Now fix $\mu\in K$ and define
\[
\nu_\mu
\defeq 
\varphi(\mu,\slot)_\#\mu
\in\mathcal P(\mathbb R).
\]
Assumption~\ref{ass:uniform-level-set-condition} implies that $\nu_\mu$ is non-atomic. To see this, note that for every $t\in\mathbb R$, it holds that
\[
\nu_\mu(\{t\})
=
\mu\bigl(
\{x\in\mathbb{R}^d\colon\varphi(\mu,x)=t\}
\bigr)
\]
and
\[
\nu_\mu(\{t\})
\le
\mu\bigl(
\{x\in\mathbb{R}^d\colon 
|\varphi(\mu,x)-t|\le\eta
\}
\bigr)
\]
for every $\eta>0$. Letting $\eta\downarrow0$ and using the uniform level set condition
gives
\begin{align}\label{eqn:pw_atomless}
    \nu_\mu(\{t\})=0
\end{align}
as claimed.

Continuing, we let
\[
F_\mu(t)
\defeq 
\nu_\mu\bigl((-\infty,t]\bigr)
\qf t\in\mathbb R
\]
be the cumulative distribution function of $\nu_\mu$. Define
its smoothed version by
\[
F_{\mu, r}(t)
\defeq 
\int_{\mathbb R}
\Theta\left(
\frac{t-s}{ r}
\right)
\,\nu_\mu(ds).
\]
Then
\[
H_ r(\mu,x)
=
F_{\mu, r}\bigl(\varphi(\mu,x)\bigr),
\]
and consequently
\[
H_ r(\mu,\slot)_\#\mu
=
(F_{\mu, r})_\#\nu_\mu.
\]
For every $t\in\mathbb R$, it holds that
\[
|F_{\mu, r}(t)-F_\mu(t)|
\le
\nu_\mu([t- r,t+ r]).
\]
Indeed, the functions
\[
s\mapsto
\Theta\left(\frac{t-s}{ r}\right)
\quad\text{and}\quad
s\mapsto\mathbf 1_{(-\infty,t]}(s)
\]
coincide outside the set $[t- r,t+ r]$.
Hence,
\begin{align}\label{eqn:cdf_sup_bound}
\sup_{t\in\mathbb R}
|F_{\mu, r}(t)-F_\mu(t)|
\le \omega_K( r)
\qfa \mu\in K.
\end{align}
Since $\nu_\mu$ is non-atomic, $F_\mu$ is continuous. The
probability integral transform gives
\[
(F_\mu)_\#\nu_\mu
=
\mathrm{Unif}[0,1].
\]
Last, define
\begin{align*}
    H_0\colon K\times \R^d&\to [0,1],\\
    (\mu,x)&\mapsto F_\mu\bigl(\varphi(\mu,x)\bigr).
\end{align*}
By the preceding two displays, it holds that
\begin{align*}
    H_0(\mu,\slot)\push\mu= (F_\mu)\push\nu_\mu = \mathrm{Unif}[0,1]
\end{align*}
for all $\mu\in K$. It remains to show that $H_0$ is jointly continuous. To this end, let $(\mu_n,x_n)\to (\mu,x)$ as $n\to\infty$ in $K\times \R^d$. For any $r>0$, we estimate
\begin{align*}
    \abs{H_0(\mu_n,x_n)-H_0(\mu,x)}&\leq     
    \abs{H_0(\mu_n,x_n)-H_r(\mu_n,x_n)}\\
&\qquad\qquad +     \abs{H_r(\mu_n,x_n)-H_r(\mu,x)}\\
 & \qquad\qquad\qquad\qquad +     \abs{H_r(\mu,x)-H_0(\mu,x)}\\
 &\leq 2\omega_K(r) + \abs{H_r(\mu_n,x_n)-H_r(\mu,x)}
\end{align*}
by \eqref{eqn:level_set_residual} and \eqref{eqn:cdf_sup_bound}. The second term in the last line of the preceding display tends to zero as $n\to\infty$ for fixed $r$ by continuity of $H_r$. After sending $n\to\infty$, the first term $2\omega_K(r)$ tends to zero as $r\to 0$ by Assumption~\ref{ass:uniform-level-set-condition}. Thus, $H_0$ is continuous as asserted.
\end{proof}

\section{Proofs of results in Section~\ref{sec:application}}
This appendix provides the remaining proofs of results from Section~\ref{sec:application} in the main text.

\subsection{Proof of Theorem~\ref{thm:transformer-universality}}
\label{app:transformer-universality}

\begin{proof}
Fix $\varepsilon>0$.
By Theorem~\ref{thm:main-multi} and Remark~\ref{rem:quantitative-main}, there exists a \emph{bounded} continuous map
\[
G\colon K\times \mathbb R^d\to\mathbb R^{d'}
\]
such that
\begin{equation}
\label{eq:cor1-G-F}
\sup_{\mu\in K}
\sfW_p\bigl(G(\mu,\slot)_\#\mu,F(\mu)\bigr)
<\frac{\varepsilon}{2}.
\end{equation}
Set
\[
M_G\defeq
\sup_{(\mu,x)\in K\times \R^d}|G(\mu,x)|<\infty.
\]

We first compactify the input space.
Let
\[
\Omega\defeq[-1,1]^d
\]
and define
\begin{align*}
    \iota\colon \mathbb R^d&\to(-1,1)^d,\\
    x&\mapsto \iota(x)\defeq\frac{x}{1+|x|}.
\end{align*}
The map $\iota$ is bounded, continuous, and injective. 
Since it is $1$-Lipschitz, the induced pushforward
\begin{align*}
    \iota_\#\colon \mathcal P_p(\mathbb R^d)&\to\mathcal P(\Omega),\\
\mu&\mapsto\iota_\#\mu
\end{align*}
is continuous with respect to $\sfW_p$. 
Define
\[
K^\iota\defeq
\{\iota_\#\mu\in\mathcal{P}(\Omega)\colon \mu\in K\}\subset\mathcal P_p(\mathbb{R}^d).
\]
Since $K$ is compact, $K^\iota$ is also compact by continuity of $\iota\push$. 
Moreover,
\[
\iota_\#|_K\colon K\to K^\iota
\]
is a homeomorphism.

Since $K$ is compact in $(\mathcal P_p(\mathbb R^d),\sfW_p)$, it is uniformly tight. 
Hence
\[
\tau_R\defeq
\sup_{\mu\in K}\mu\bigl(\mathbb R^d\setminus B_R(0)\bigr)
\to0
\qas R\to\infty.
\]
Choose $R>0$ sufficiently large such that
\begin{equation}
\label{eq:cor1-R-choice}
2M_G\tau_R^{1/p}<\frac{\varepsilon}{4}.
\end{equation}
Set
\[
C_R\defeq\iota\bigl(\overline{B_R(0)}\bigr)\subset\Omega.
\]
Then $C_R$ is compact. 
Define
\[
\widetilde G_R\colon
K^\iota\times C_R\to\overline{B_{M_G}(0)}
\]
by
\[
\widetilde G_R\bigl(\iota_\#\mu,\iota(x)\bigr)
\defeq G(\mu,x)\qfa
\mu\in K\qa x\in\overline{B_R(0)}.
\]
This map is well-defined and continuous because $\iota$ and $\iota_\#|_K$ are homeomorphisms onto their images.

Since $K^\iota\times C_R$ is a closed subset of the compact metric space $\mathcal P(\Omega)\times\Omega$ and $\overline{B_{M_G}(0)}$ is convex, the Dugundji extension theorem \citep{dugundji1951extension} delivers the existence of a continuous map
\[
\widehat G_R\colon
\mathcal P(\Omega)\times\Omega
\to\overline{B_{M_G}(0)}
\]
such that
\begin{equation}
\label{eq:cor1-Ghat-agrees}
\widehat G_R\bigl(\iota_\#\mu,\iota(x)\bigr)
=G(\mu,x)
\qfa 
\mu\in K\qa x\in\overline{B_R(0)},
\end{equation}
and
\begin{equation}
\label{eq:cor1-Ghat-bound}
\sup_{(\nu,z)\in \mathcal P(\Omega)\times\Omega}|\widehat G_R(\nu,z)|\le M_G.
\end{equation}

Choose $\delta\in(0,1)$ sufficiently small such that
\begin{equation}
\label{eq:cor1-delta-choice}
\delta<\frac{\varepsilon}{8}.
\end{equation}
By the universal approximation theorem for measure-theoretic transformer in-context maps on the compact domain $\mathcal{P}(\Omega)\times\Omega$ \citep[Theorem~1]{furuya2024transformers}, there exists a measure-theoretic transformer
\[
\sfT_\delta\colon 
\mathcal P(\Omega)\times\Omega\to\mathbb R^{d'}
\]
of the form \eqref{eq:in-cotext-map}---also depending on $R$---such that
\begin{equation}
\label{eq:cor1-transformer-approx}
\sup_{(\nu,z)\in\mathcal P(\Omega)\times\Omega}
\bigl|\sfT_{\delta}(\nu,z)
-\widehat G_R(\nu,z)\bigr|<\delta.
\end{equation}
In particular, by \eqref{eq:cor1-Ghat-bound} and the preceding display, it holds that
\begin{equation}
\label{eq:cor1-transformer-bound}
\sup_{(\nu,z)\in\mathcal P(\Omega)\times\Omega}|\sfT_\delta(\nu,z)|
< \delta + M_G.
\end{equation}

Since $\Omega$ is compact, $\mathcal P(\Omega)\times\Omega$ is compact. Since $\sfT_\delta$ is continuous on $\mathcal P(\Omega)\times\Omega$ \citep{furuya2026approximation}, it is uniformly
continuous. 
Therefore, there exists a nondecreasing function
\[
\omega_\delta\colon [0,\infty)\to[0,\infty)
\]
such that $\lim_{r\downarrow0}\omega_\delta(r)=0$ and
\begin{equation}
\label{eq:cor1-modulus}
\bigl|\sfT_\delta(\nu,z)
-\sfT_\delta(\nu',z')\bigr|
\le
\omega_\delta\bigl(
\sfW_p(\nu,\nu')+|z-z'|
\bigr)
\end{equation}
for every $\nu$ and $\nu'$ in $\mathcal P(\Omega)$ and $z$ and $z'$ in $\Omega$.
Now choose $\rho>0$ such that
\begin{equation}
\label{eq:cor1-rho-choice}
\omega_\delta(\rho)<\frac{\varepsilon}{8}.
\end{equation}
After $\delta$ and $\rho$ have been fixed, choose $S\ge R$ sufficiently large such that
\begin{equation}
\label{eq:cor1-S-choice}
\tau_S\defeq
\sup_{\mu\in K}\mu\bigl(\mathbb R^d\setminus B_S(0)\bigr)
<
\left(\frac{\rho}{4\sqrt d}\right)^p.
\end{equation}

We next approximate $\iota$ on $\overline{B_S(0)}$ by a token-wise MLP.
Define
\[
\operatorname{clip}(t)
\defeq
-1+\operatorname{ReLU}(t+1)-\operatorname{ReLU}(t-1)
\]
and
\[
\operatorname{Clip}(z)
\defeq
\bigl(\operatorname{clip}(z_1),\ldots,
\operatorname{clip}(z_d)\bigr).
\]
Then $\operatorname{Clip}$ is exactly representable by a shallow ReLU neural network and satisfies
\[
\operatorname{Clip}(\mathbb R^d)\subset\Omega.
\]
For any $\eta>0$, there exists a ReLU MLP $\widetilde N_\eta\colon\mathbb R^d\to\mathbb R^d$ satisfying
\[
\sup_{x\in\overline{B_S(0)}}
|\widetilde N_\eta(x)-\iota(x)|<\eta
\]
by the universal approximation theorem for ReLU MLPs. Set
\[
N_\eta\defeq\operatorname{Clip}\circ\widetilde N_\eta,
\]
which is also a MLP. Since $\operatorname{Clip}$ is the metric projection onto $\Omega=[-1,1]^d$, it is $1$-Lipschitz. Thus,
\begin{equation}
\label{eq:cor1-N-approx}
\sup_{x\in\overline{B_S(0)}}
|N_\eta(x)-\iota(x)|=\sup_{x\in\overline{B_S(0)}}
\bigl|\operatorname{Clip}\bigl(\widetilde{N}_\eta(x)\bigr)-\operatorname{Clip}\bigl(\iota(x)\bigr)\bigr|<\eta.
\end{equation}
Moreover,
\[
N_\eta(\mathbb R^d)\subset\Omega.
\]
Choose $\eta>0$ sufficiently small such that
\begin{equation}
\label{eq:cor1-eta-choice}
\eta<\frac{\rho}{4}.
\end{equation}

Let $\Gamma_{\rm id}$ be an attention layer of the form \eqref{eq-mf-self-attn} with $Q^h=K^h=V^h=0$ for every
head $h$. 
Then
\[
\Gamma_{\rm id}(\mu,x)=x
\]
for every $(\mu,x)\in
\mathcal P(\mathbb R^d)\times\mathbb R^d$.
Define $\sfT\colon \mathcal P(\mathbb R^d)\times\mathbb R^d\to\R^{d'}$ by
\begin{equation*}
\sfT
\defeq
\sfT_{\delta}\diamond N_\eta
\diamond\Gamma_{\rm id}.
\end{equation*}
Equivalently,
\[
\sfT(\mu,x)
=
\sfT_{\delta}
\bigl((N_\eta)\push\mu, N_\eta(x)\bigr).
\]
The initial identity attention layer and the token-wise MLP $N_\eta$ put $\sfT$ in the form \eqref{eq:in-cotext-map}. 
The initial layer $\Gamma_{\rm id}$ is defined for
every probability measure. Then $\mu\mapsto (N_{\eta})\push\mu$ maps $\mathcal{P}(\R^d)$ into $\mathcal{P}([-1,1]^d)$. Consequently, all subsequent attention layers act on measures supported on compact subsets of Euclidean space because measure-theoretic attention operators map compactly supported probability measures to compactly supported probability measures.
Thus, $\sfT$ is well-defined on $\mathcal P(\mathbb R^d)\times\mathbb R^d$.

We next estimate the error incurred by replacing $\iota$ with $N_\eta$.
Let $\mu\in K$. The probability measure $(N_\eta,\iota)_\#\mu$ is a coupling of $(N_\eta)_\#\mu$ and $\iota_\#\mu$. 
Hence
\begin{align*}
\sfW_p^p\bigl((N_\eta)_\#\mu,\iota_\#\mu\bigr)
&\le
\int_{\mathbb R^d}
|N_\eta(x)-\iota(x)|^p\,\mu(dx)\\
&\le
\eta^p+(2\sqrt d)^p
\mu\bigl(\mathbb R^d\setminus B_S(0)\bigr)
\end{align*}
by splitting the integral and invoking \eqref{eq:cor1-N-approx}.
Using $(\abs{a}^p+\abs{b}^p)^{1/p}\le \abs{a}+\abs{b}$ yields
\begin{equation*}
\sup_{\mu\in K}
\sfW_p\bigl((N_\eta)_\#\mu,\iota_\#\mu\bigr)
\le
\eta+2\sqrt d\,\tau_S^{1/p},
\end{equation*}
where $\tau_S$ is as in \eqref{eq:cor1-S-choice}.
So, for $x\in B_R(0)\subset B_S(0)$, it holds that
\begin{align*}
\sfW_p\bigl((N_\eta)_\#\mu,\iota_\#\mu\bigr)
+|N_\eta(x)-\iota(x)|
\le
2\eta+2\sqrt d\,\tau_S^{1/p}
<
\frac{\rho}{2}+\frac{\rho}{2}
=\rho
\end{align*}
by \eqref{eq:cor1-eta-choice} and \eqref{eq:cor1-S-choice}. Then by \eqref{eq:cor1-modulus} and
\eqref{eq:cor1-rho-choice}, it holds that
\begin{equation*}
\bigl|
\sfT(\mu,x)
-
\sfT_{\delta}\bigl(\iota_\#\mu,\iota(x)\bigr)\bigr|=
\bigl|
\sfT_{\delta}
\bigl((N_\eta)_\#\mu,N_\eta(x)\bigr)
-
\sfT_{\delta}\bigl(\iota_\#\mu,\iota(x)\bigr)\bigr|
\leq \omega_\delta(\rho)
<
\frac{\varepsilon}{8}.
\end{equation*}
Moreover, \eqref{eq:cor1-Ghat-agrees} and \eqref{eq:cor1-transformer-approx} yield
\begin{align*}
\sup_{\mu\in K}\bigl|
\sfT_{\delta}\bigl(\iota_\#\mu,\iota(x)\bigr)
-G(\mu,x)
\bigr|<\delta.
\end{align*}
Thus, if $x\in B_R(0)$, then
\begin{align}\label{eqn:indep_interest_ua_bound}
    \abs{\sfT(\mu,x)-G(\mu,x)}<\frac{\ep}{8} + \delta.
\end{align}
In the other hand, if $x\in\mathbb R^d\setminus B_R(0)$, then \eqref{eq:cor1-transformer-bound} and a crude bound show that
\begin{equation*}
\label{eq:cor1-embedding-outside}
\abs{\sfT(\mu,x)-G(\mu,x)}\leq (\delta + M_G) + M_G=\delta+2M_G.
\end{equation*}
Consequently,
\begin{align}\label{eq:cor1-integrated-embedding}
\abs{\sfT(\mu,x)-G(\mu,x)}\leq  \frac{\ep}{8}\mathbf{1}_{B_R(0)}(x) + \delta + 2M_G\mathbf{1}_{\R^d\setminus B_R(0)}(x)\qfa x\in\R^d.
\end{align}

Using the common random variable $X\sim\mu$ to couple $\sfT(\mu,X)$ and $G(\mu,X)$, followed by Minkowski's inequality and \eqref{eq:cor1-integrated-embedding}, we obtain the estimate
\begin{equation}
\label{eq:cor1-transformer-G-approx}
\begin{aligned}
\sup_{\mu\in K}
\sfW_p\bigl(
\sfT(\mu,\slot)_\#\mu,
G(\mu,\slot)_\#\mu
\bigr)
\le
\frac{\ep}{8}
+\delta+ 2M_G\tau_R^{1/p}<\frac{\ep}{2}
\end{aligned}
\end{equation}
by \eqref{eq:cor1-delta-choice} and \eqref{eq:cor1-R-choice}. 
Last, the triangle inequality, \eqref{eq:cor1-transformer-G-approx}, and \eqref{eq:cor1-G-F} deliver the bound
\[
\begin{aligned}
\sup_{\mu\in K}
\sfW_p\bigl(
\sfT(\mu,\slot)_\#\mu,
F(\mu)
\bigr)
&\le
\sup_{\mu\in K}
\sfW_p\bigl(
\sfT(\mu,\slot)_\#\mu,
G(\mu,\slot)_\#\mu
\bigr)\\
&\qquad\qquad+
\sup_{\mu\in K}
\sfW_p\bigl(
G(\mu,\slot)_\#\mu, F(\mu)
\bigr)\\
&<
\frac{\varepsilon}{2}
+\frac{\varepsilon}{2}
=\varepsilon
\end{aligned}
\]
as asserted.
\end{proof}

\subsection{Proof of Corollary~\ref{cor:main-cross}}\label{app:main-cross-proof}
\begin{proof}
The proof follows the same construction as that of
Theorem~\ref{thm:main-multi}, but now with the input measure $\mu$ replaced
by the continuously-varying source measure $\eta(\mu)$.
Indeed, the assumption \eqref{eq:cross-uniform-level-set}, together with
the continuity of $\eta$, yields, by the same argument as in
Lemma~\ref{lem:Uniform-approximate-uniformizer}, a continuous map
\[
\widetilde{H}_0\in \mathcal{C}\bigl(K\times \mathbb R^m,[0,1]\bigr)
\]
such that
\[
\widetilde{H}_0(\mu,\slot)_\#\eta(\mu)=
\operatorname{Unif}[0,1]
\qfa \mu\in K.
\]
The remainder of the proof is identical to that of
Theorem~\ref{thm:main-multi}, except now with
$H_0(\mu,\slot)_\#\mu$ replaced by
$\widetilde{H}_0(\mu,\slot)_\#\eta(\mu)$.
\end{proof}

\end{document}